\documentclass{article} 
\usepackage{iclr2021_conference,times}


\usepackage{amsmath,amsfonts,bm}









\def\eqref#1{equation~\ref{#1}}









\def\1{\bm{1}}










\DeclareMathAlphabet{\mathsfit}{\encodingdefault}{\sfdefault}{m}{sl}
\SetMathAlphabet{\mathsfit}{bold}{\encodingdefault}{\sfdefault}{bx}{n}













\DeclareMathOperator*{\argmax}{arg\,max}
\DeclareMathOperator*{\argmin}{arg\,min}

\usepackage[utf8]{inputenc} 
\usepackage[T1]{fontenc}    
\usepackage{hyperref}       
\usepackage{url}            

\usepackage{booktabs}       
\usepackage{amsfonts}       
\usepackage{nicefrac}       
\usepackage{microtype}      
\usepackage{color}
\usepackage{amsthm}
\usepackage{amsmath}
\usepackage{amssymb}
\usepackage{soul}
\usepackage{algorithm, algpseudocode}
\usepackage{caption}
\usepackage{hyperref}
\usepackage{cleveref}
\usepackage{graphicx}
\graphicspath{ {./figures/} } 
\usepackage{comment}
\usepackage{dsfont}
\usepackage{thmtools}
\usepackage{thm-restate}
\usepackage{bbm}
\usepackage{subcaption}
\usepackage{pgf}


\title{
	Learning to Generate 3D Shapes with \\
	Generative Cellular Automata
}


\author{Dongsu Zhang, Changwoon Choi, Jeonghwan Kim \& Young Min Kim \\
	Department of Electrical and Computer Engineering, Seoul National University \\
	\texttt{\{96lives, zzzmaster, whitealex95, youngmin.kim\}@snu.ac.kr } \\
}

%


\def\diff#1{{\color{magenta}{\small\bf\sf#1}}}

\newcommand{\norm}[1]{\left\lVert#1\right\rVert}

\iclrfinalcopy 

\iclrfinalcopy 
\begin{document}

\maketitle
\begin{abstract}
We present a probabilistic 3D generative model, named Generative Cellular Automata, which is able to produce diverse and high quality shapes.
We formulate the shape generation process as sampling from the transition kernel of a Markov chain, where the sampling chain eventually evolves to the full shape of the learned distribution.
The transition kernel employs the local update rules of cellular automata, 
effectively reducing the search space in a high-resolution 3D grid space by exploiting the connectivity and sparsity of 3D shapes.
Our progressive generation only focuses on the sparse set of occupied voxels and their neighborhood, thus enabling the utilization of an expressive sparse convolutional network.
We propose an effective training scheme to obtain the local homogeneous rule of generative cellular automata with sequences that are slightly different from the sampling chain but converge to the full shapes in the training data.
Extensive experiments on probabilistic shape completion and shape generation demonstrate that our method achieves competitive performance against recent methods.

\end{abstract}

\section{Introduction}
Probabilistic 3D shape generation aims to learn and sample from the distribution of diverse 3D shapes and has applications including 3D contents generation or robot interaction.
Specifically, learning the distribution of shapes or scenes can automate the process of generating diverse and realistic virtual environments or new object designs.
Likewise, modeling the conditional distribution of the whole scene given partial raw 3D scans can help the decision process of a robot, by informing various possible outputs of occluded space.

The distribution of plausible shapes in 3D space is diverse and complex, and we seek a scalable formulation of the shape generation process. 
Pioneering works on 3D shape generation try to regress the entire shape (\cite{dai2017complete}) which often fail to recover fine details.
We propose a more modular approach that progressively generates shape by a sequence of local updates.
Our work takes inspiration from prior works on autoregressive models in the image domains, such as the variants of pixelCNN (\cite{oord2016pixelrnn, oord2016pixelcnn, oord2017vqvae}), 
which have been successful in image generation.
The key idea of pixelCNN (\cite{oord2016pixelcnn}) is to order the pixels, and then learn the conditional distribution of the next pixel given all of the previous pixels.
Thus generating an image becomes the task of sampling pixel-by-pixel in the predefined order.
Recently, PointGrow (\cite{sun2020pointgrow}) proposes a similar approach in the field of 3D  generation, replacing the RGB values of pixels with the coordinates of points and sampling point-by-point in a sequential manner.
While the work proposes a promising interpretable generation process by sequentially growing a shape, the required number of sampling procedures expands linearly with the number of points, making the model hard to scale to high-resolution data.

We believe that a more scalable solution in 3D is to employ the local update rules of cellular automata (CA).
CA, a mathematical model operating on a grid, defines a state to be a collection of cells that carries values in the grid (\cite{wolfram1982ca}).
The CA repeatedly mutates its states based on the predefined homogeneous update rules only determined by the spatial neighborhood of the current cell.
In contrast to the conventional CA where the rules are predefined, we employ a neural network to infer the stochastic sequential transition rule of individual cells based on Markov chain.
The obtained homogeneous local rule for the individual cells constitutes the 3D generative model, named Generative Cellular Automata (GCA). 
When the rule is distributed into the group of occupied cells of an arbitrary starting shape, the sequence of local transitions eventually evolves into an instance among the diverse shapes from the multi-modal distribution.
The local update rules of CA greatly reduce the search space of voxel occupancy, exploiting the sparsity and connectivity of 3D shapes.


We suggest a simple, progressive training procedure to learn the  distribution of local transitions of which repeated application generates the shape of the data distribution.
We represent the shape in terms of surface points and store it within a 3D grid, and the transition rule is trained only on the occupied cells by employing a sparse CNN (\cite{graham2018sparseconv}).
The sparse representation can capture the high-resolution context information, and yet learn the effective rule enjoying the expressive power of deep CNN as demonstrated in various computer vision tasks (\cite{kirzhevsky2012alexnet, he2017maskrcnn}).
Inspired by \cite{bordes2017learning}, our model learns sequences that are slightly different from the sampling chain but converge to the full shapes in the training data.
The network successfully learns the update rules of CA, such that a single inference samples from the distribution of diverse modes along the surface.

The contributions of the paper are highlighted as follows:
(1) We propose Generative Cellular Automata (GCA), a Markov chain based 3D generative model that iteratively mends the shape to a learned distribution, generating diverse and high-fidelity shapes.
(2) Our work is the first to learn the local update rules of cellular automata for 3D shape generation in voxel representation.
This enables the use of an expressive sparse CNN and reduces the search space of voxel occupancy by fully exploiting sparsity and connectivity of 3D shapes.
(3) Extensive experiments show that our method has competitive performance against the state-of-the-art models in probabilistic shape completion and shape generation.

\section{3D Shape Generation with Generative Cellular Automata}
\label{sec:GCA}

\begin{figure}	
	\includegraphics[width=\linewidth]{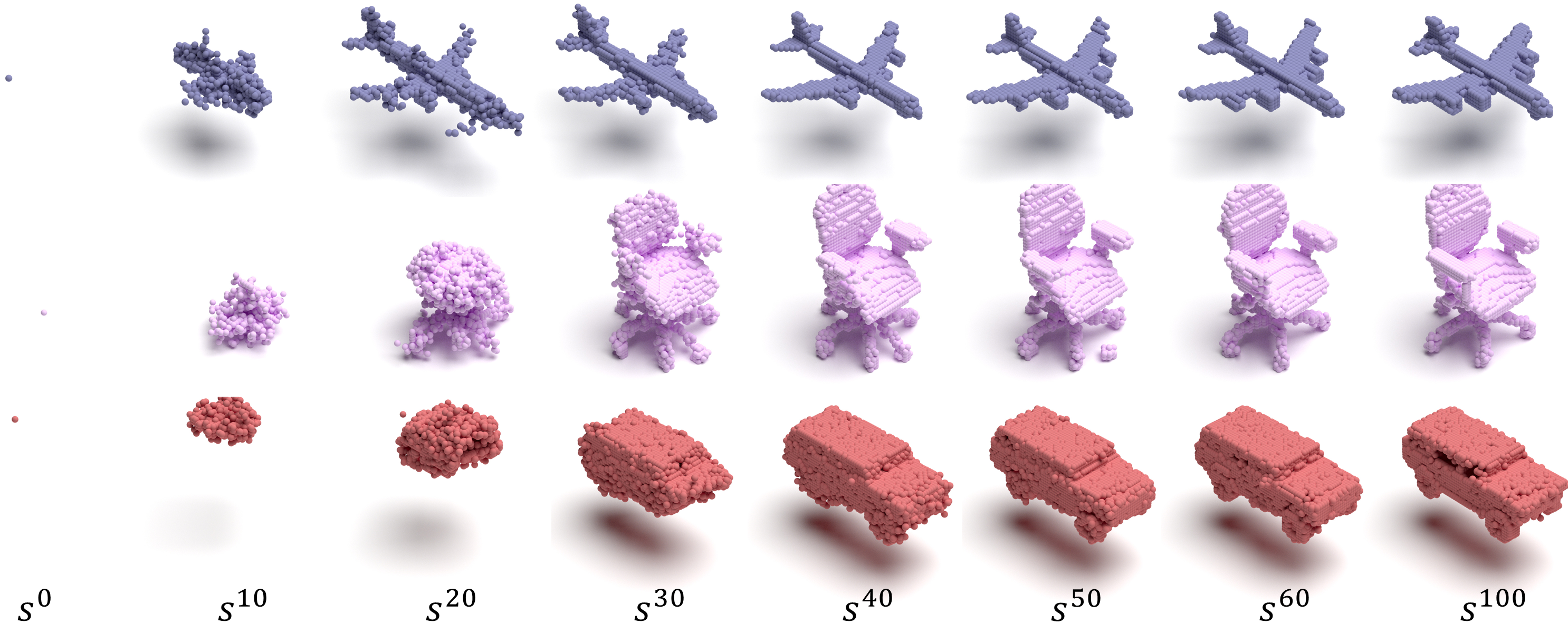}
	\caption[Sampling Chain of GCA shape generation.] {
		Sampling chain of shape generation.
		Full generation processes are included in the supplementary material.
	}
	\label{fig:teaser}
	\vspace{-0.5em}
\end{figure}

Let $\mathds{Z}^n$ be an $n$-dimensional uniform grid space, where $n=3$ for a 3D voxel space.
A 3D shape is represented as a state $s \subset \mathds{Z}^3$, which is an ordered set of occupied cells $c\in \mathds{Z}^3$ in a binary occupancy grid based on the location of the surface.
Note that our voxel representation is different from the conventional occupancy grid, where 1 represents that the cell is inside the surface.
Instead, we only store the cells lying on the surface.
This representation can better exploit the sparsity of 3D shape than the full occupancy grid. 

The shape generation process is presented as a sequence of state variables $s^{0:T}$ that is drawn from the following Markov Chain:
\begin{equation}
	s^0  \sim p^0	\quad s^{t + 1} \sim p_\theta(s^{t + 1} | s^t) 
\end{equation}
where $p^0$ is the initial distribution and $p_\theta$ is the homogeneous transition kernel parameterized by $\theta$.
We denote the sampled sequence  $s^0 \rightarrow s^1 \rightarrow \cdots \rightarrow s^{T} $ as a \textit{sampling chain}.
Given the data set $\mathcal{D}$ containing 3D shapes $x \in \mathcal{D}$, our objective is to learn the parameters $\theta$ of transition kernel $p_\theta$, such that the marginal distribution of final generated sample $p(s^T) = \sum_{s^{0:T - 1}} p^0(s^0) \prod_{0 \leq t < T} p_\theta(s^{t + 1} | s^t)$ is close to the data distribution.
The initial state ${s^0}$ can be defined differently depending on the task to solve.
For the task of probabilistic shape completion, $s^0$ is given as the partial input shape.
For shape generation, we set the initial state $s^0$ to be the most simple state we can think of, a single cell $\{c\}$.
Figure~\ref{fig:teaser} presents examples of sampling chains of shape generation, where the starting shape $s^0$ is merely a single cell.

The GCA further splits the transition kernel $p_\theta(s^{t + 1} | s^t)$ to be the combination of local update rules on individual occupied cells  $c_i \in s^t$, as depicted in Figure~\ref{overview}.
The cellular transition is implemented with the sparse convolution, which is translation invariant if implemented with a fully convolutional network, and outputs the distribution of local occupied cells.
Then individual predictions are aggregated by cell-wise averaging, resulting in the binary probability distribution for occupancy of each cell that follows the Bernoulli distribution:
\begin{equation}
    p_{\theta}(s^{t + 1} | s^t) = \prod_{c \in \mathds{Z}^n}p_{\theta}(c | s^t).
    \label{eq:cell_sampling}
\end{equation}
The next state $s^{t+1}$ is sampled independently for individual cells from the obtained distribution and the sampling chain continues to the next time step.

For each transition $p_\theta(s^{t + 1} | s^t)$, we limit the search space of the occupied cells by confining our predictions within the neighborhood of the occupied cells.
The underlying assumption is that the occupied cells of a valid shape are connected and a successful generation is possible by progressive growing into the immediate neighborhood of the given state.
Specifically, the output of the sparse convolution is the occupancy probability of neighborhood cells $p_i=p_\theta(\mathcal{N}(c_i)|s^t)$, where the neighborhood cells are those that fall within a radius $r$ ball centered at the cell, $\mathcal{N}(c_i)=\{ c' \in \mathds{Z}^n |\;  d(c_i, c') \leq r\}$ given a distance metric $d$.
Other cells are ignored assuming they have probability 0 of occupancy.
If the input state has $M$ occupied cells $s^t = \{c_1, \cdots, c_M\}$, the sparse convolution predicts the occupancy probability of individual cells with $N$-dimension vectors $\mathcal{P} = \{{p}_1, \cdots, {p}_M \}$, where $N$ is the number of neighborhood cells fixed by the distance threshold $r$ within the uniform grid $\mathds{Z}^n$.
After the cell-wise averaging step, the aggregated probability is nonzero for coordinates in $\mathcal{N}(s^t)=\bigcup_{c \in s^t} \mathcal{N}(c)$.
Then the cell-wise sampling in Eq.~(\ref{eq:cell_sampling}) is performed only within $\mathcal{N}(s^t)$, instead of the full grid $\mathds{Z}^n$, leading to an efficient sampling procedure.

The stochastic local transition rule $p_\theta(\mathcal{N}(c_i)|s^t)$ changes the state of a cell's immediate neighborhood $N(c_i)$, but the inference is determined from a larger perception neighborhood.
In contrast, classical cellular automata updates a  state of a cell determined by a fixed rule given the observation of the cell's immediate neighborhood.
The large perception neighborhood of GCA is effectively handled by deep sparse convolutional network, and results in convergence to a single consistent global shape out of diverse possible output shapes  as further discussed in the appendix~\ref{sec:appendix_mode}.

\begin{figure}
	\includegraphics[width=\linewidth]{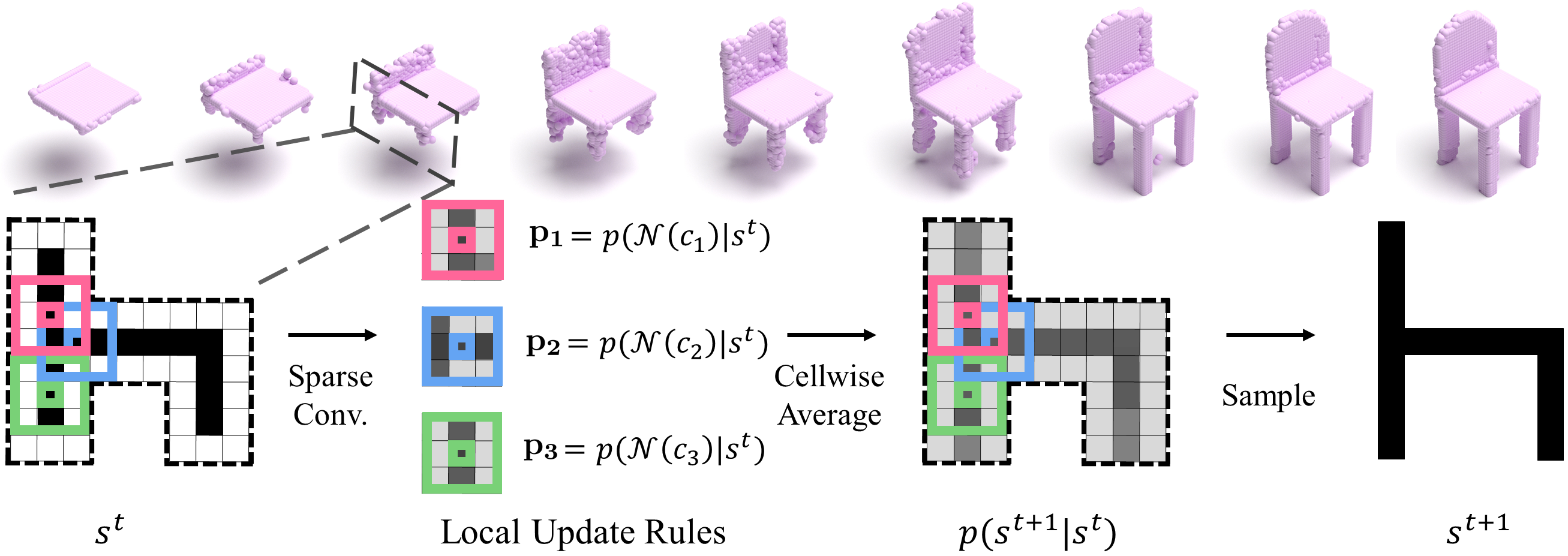}
	\vspace{-1em}
	\caption[Overview] {
		Overview of GCA.
		The top shows a probabilistic shape completion sampling chain of a chair starting from the partial input.
		Bottom figures show an illustrative figure of a single transition.
		Starting from state $s^t$ of occupied voxels, we apply sparse CNN to obtain the neighborhood occupancy probability.
		Then we aggregate the probabilities by cell-wise averaging and sample the next state.
	}
	\label{overview}
\end{figure}

\section{Training Generative Cellular Automata}
\label{sec:training}

We aim to learn the parameters for the local transition probability $p_\theta(\mathcal{N}(c_i)|s^t)$, whose repetitive application generates shape that follows the complex distribution of the data set.
If we have sequences of sampling chains, then all current and next states can serve as the training data.
Because we only have the set of complete shapes $\mathcal{D}$, we emulate the sequence of sampling chain and obtain the intermediate states.

The emulated sequence of a sampling chain may start from an arbitrary state, and needs to converge to the full shape $x \in \mathcal{D}$ after local transitions to the neighbor of the previous state.
A naive method would be to sample the next state $s^t$ from the sampling chain and maximize $p_\theta(x \cap \mathcal{N}(s^t) | s^t)$, the probability of the shape that is closest to $x$ among reachable forms from the current state\footnote{
    Since we defined a state as a set of occupied cells, union ($\cup$), intersection ($\cap$), and subset ($\subset$) operations can be defined as regular sets.
}, similar to scheduled sampling (\cite{bengio2015scheduled}).
While this approach clearly emulates the inference procedure, it leads to learning a biased estimator as pointed out in \cite{huszr2015train}. 
More importantly, the emulated sequence cannot consistently learn the full shape $x$ as it is not guaranteed to visit the state $s$ such that $x \subset \mathcal{N}(s)$.

We instead employ the use of infusion training procedure suggested by \cite{bordes2017learning}. 
Specifically, the \textit{infusion chain}, denoted as $\tilde{s}^0 \rightarrow \tilde{s}^1 \rightarrow ... \rightarrow \tilde{s}^{T}$, is obtained as following:
\begin{equation}
\tilde{s}^0 \sim q^0(\tilde{s}^0|x)
\quad q^t(\tilde{s}^{t+1} | \tilde{s}^t, x) = \prod_{\tilde{c} \in \mathcal{N}(\tilde{s}^t)} (1 - \alpha^t) p_\theta(\tilde{c} | \tilde{s}^t) + \alpha^t \delta_x(\tilde{c})
\label{eq:infusion_chain}
\end{equation}
where $q^0$ indicates the initial distribution of infusion chain, and $q^t$ is the infusion transition kernel at time step $t$.
For probabilistic shape completion $\tilde{s}^0$ is sampled as a subset of $x$, while for shape generation $\tilde{s}^0$ is a single cell $c \in x$.
The transition kernel $q^t$ is defined for cells in the neighborhood $\tilde{c}  \in \mathcal{N}(\tilde{s}_t)$ as the mixture  of   $p_\theta(\tilde{c} | \tilde{s}^t)$ and $\delta_x(\tilde{c})$, which are the transition kernel of the sampling chain and  the infusion of the ground shape $x$, respectively.
$\delta_x(\tilde{c})$ is crucial to guarantee the sequence to ultimately reach the ground truth shape, and 
is formulated as the Bernoulli distribution with probability 1, if $\tilde{c} \in x$, else 0.
We set the infusion rate to increase linearly with respect to time step, i.e., $\alpha^t = \max(wt, 1)$, where $w > 0$ is the speed of infusion rate as in \cite{bordes2017learning}.

We can prove that the training procedure converges to the shape $x$ in the training data if it is composed of weakly connected cells.
We first define the connectivity of two states.
\begin{restatable}{defn}{mydef}
		We call a state $\tilde{s}'$ to be \textbf{partially connected} to state $x$, if for any cell $b \in x$, there is a cell $c_0 \in \tilde{s}' \cap x$, and a finite sequence of coordinates $c_{0:T_b}$ in $x$ that starts with $c_0$ and ends with $c_{T_b} = b$, where each subsequent element is closer than the given threshold distance $r$,
		i.e., for any $b \in x, \exists c_{0:T_b}, \text{ such that } c_i \in x, \, d(c_i, c_{i + 1}) \leq r \, ,0 \leq i \leq T_b \; and \; c_0 \in \tilde{s}', c_{T_b} = b$.
\end{restatable}
The shape $x$ is partially connected to any $\tilde{s}'\neq \emptyset$ if we can create a sequence of coordinates between any pair of cells in $x$ that is composed of local transitions bounded by the distance $r$. 
This is a very weak connectivity condition, and any set that overlaps with  $x$ 
is partially connected to $x$ for shapes with continuous surfaces, which include shapes in the conventional dataset.

Now assuming that the state $\tilde{s}^{t'}$ is partially connected to the state $x$, we recursively create a sequence by defining $\tilde{s}^{t + 1} = \mathcal{N}(\tilde{s}^t) \cap x$, which is the next sequence of infusion chain with infusion rate 1.
Since we use a linear scheduler for infusion rate, we can assume that there exists a state $\tilde{s}^{t'}$ such that infusion rate $\alpha^{t'} = 1$.
The following proposition proves that the sequence $(\tilde{s}^t)_{t \geq t'}$ converges to $x$ with local transitions.

\begin{restatable}{prop}{myprop}
	Let state $\tilde{s}^{t'}$ be partially connected to state $x$, where $x$ has a finite number of occupied cells.
	We denote a sequence of states $\tilde{s}^{t':\infty}$, recursively defined as $\tilde{s}^{t + 1} = \mathcal{N}(\tilde{s}^t) \cap x$.
	Then, there exists an integer $T'$ such that $\tilde{s}^t = x, \; t \geq T'$.
	\label{prop1}
\end{restatable}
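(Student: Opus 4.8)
The plan is to prove that the recursion $\tilde{s}^{t+1} = \mathcal{N}(\tilde{s}^t)\cap x$ grows monotonically, is trapped inside the finite set $x$, and reaches every cell of $x$ within a bounded number of steps dictated by the partial-connectivity paths. First I would record two elementary facts. For every $t \geq t'+1$ we have $\tilde{s}^t = \mathcal{N}(\tilde{s}^{t-1})\cap x \subseteq x$, so all states after the first are subsets of $x$. Because $d(c,c)=0\le r$ gives $c \in \mathcal{N}(c)$, any state with $\tilde{s}^t \subseteq x$ satisfies $\tilde{s}^t \subseteq \mathcal{N}(\tilde{s}^t)\cap x = \tilde{s}^{t+1}$, so the sequence is non-decreasing from $t'+1$ onward.

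The central tool is a one-step growth lemma: if $c \in \tilde{s}^t$ and $c' \in x$ with $d(c,c') \leq r$, then $c' \in \mathcal{N}(c) \subseteq \mathcal{N}(\tilde{s}^t)$ and $c' \in x$, hence $c' \in \tilde{s}^{t+1}$; note this needs no assumption that $\tilde{s}^t \subseteq x$, so it already applies at $t=t'$. I would then invoke partial connectivity: fix any $b \in x$ and take the path $c_0, c_1, \ldots, c_{T_b}=b$ in $x$ with $c_0 \in \tilde{s}^{t'}\cap x$ and $d(c_i,c_{i+1}) \leq r$. Applying the growth lemma with $c=c'=c_0$ shows $c_0 \in \tilde{s}^{t'+1}$, and a straightforward induction along the path then gives $c_j \in \tilde{s}^{t'+1+j}$, so in particular $b \in \tilde{s}^{t'+1+T_b}$.

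To finish, I would pass to a uniform stopping time. Since $x$ is finite and each path length $T_b$ is finite, $T^\star := \max_{b \in x} T_b$ is finite; set $T' = t'+1+T^\star$. Each $b \in x$ has entered the chain by step $t'+1+T_b \leq T'$, and monotonicity keeps it there, so $x \subseteq \tilde{s}^{T'}$; combined with $\tilde{s}^{T'} \subseteq x$ this yields $\tilde{s}^{T'}=x$. Since $\mathcal{N}(x)\cap x = x$ makes $x$ a fixed point of the recursion, $\tilde{s}^t = x$ for all $t \geq T'$, as claimed.

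The difficulty here is organizational rather than conceptual. The subtle point is that the anchor $c_0$ guaranteed by partial connectivity may be a different cell for each target $b$, so one cannot argue about a single path; the per-cell entry times $t'+1+T_b$ must be merged into one bound, which is exactly where finiteness of $x$ and monotonicity are used. Finiteness is doubly essential, as it both makes $T^\star$ well-defined and supplies the containment $\tilde{s}^t \subseteq x$ that upgrades the inclusion $x \subseteq \tilde{s}^{T'}$ to an equality.
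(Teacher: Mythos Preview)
Your proof is correct and follows essentially the same strategy as the paper's: establish that the sequence is monotone and contained in $x$, then use the partial-connectivity paths together with a one-step growth argument to show each $b\in x$ is absorbed after $T_b$ steps, and take the maximum over the finite set $x$ to get a uniform $T'$. The only cosmetic difference is your $+1$ index shift (bootstrapping $c_0$ into $\tilde{s}^{t'+1}$ before running the path induction), which is a slightly more cautious treatment of the fact that $\tilde{s}^{t'}$ itself need not lie in $x$; the paper instead starts the induction directly at $\tilde{s}^{t'}$ since $c_0\in\tilde{s}^{t'}$ is already given.
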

\begin{proof}
\vspace{-0.7em}
The proof is found in the appendix~\ref{appendix_prop1}.
\vspace{-0.7em}
\end{proof}

The proposition states that the samples of infusion chain eventually converge to the shape $x$, and thus we can compute the nonzero likelihood $p(x | \tilde{s}^{T})$ during training  if $T$ is large enough.
One can also interpret the training procedure as learning the sequence of states that converges to $x$ and is close to the sampling chain.
We empirically observe that most of the training samples converge to the shape $x$ before the infusion rate becomes 1.

The training procedure can be summarized as the following:
\begin{enumerate}
	\item Sample infusion chain $\tilde{s}^{0: T}$ from $\tilde{s}^0 \sim q^0(\tilde{s}^0|x), \; \tilde{s}^{t + 1} \sim q^t(\tilde{s}^{t + 1} | \tilde{s}^t, x)$. 
	\item For each state $\tilde{s}^t$, maximize the log-likelihood that has the closest form to $x$ via gradient descent, i.e., $\theta \leftarrow \theta + \eta \frac{\partial \log p_\theta(x \cap \mathcal{N}(\tilde{s}^t) | \tilde{s}^t)}{\partial \theta} $.
\end{enumerate}

The full training algorithm utilizes a data buffer to de-correlate the gradients of subsequent time steps, and accelerates the process by controlling the time steps based on the current state.
More details regarding the full training algorithm can be found in the appendix~\ref{appendix_training}.

\section{Experiments}
\label{sec:experiments}

\begin{table}[t]
    \begin{minipage}[t]{0.527\linewidth}
        \vspace{0pt}
        \centering
        \includegraphics[width=\linewidth]{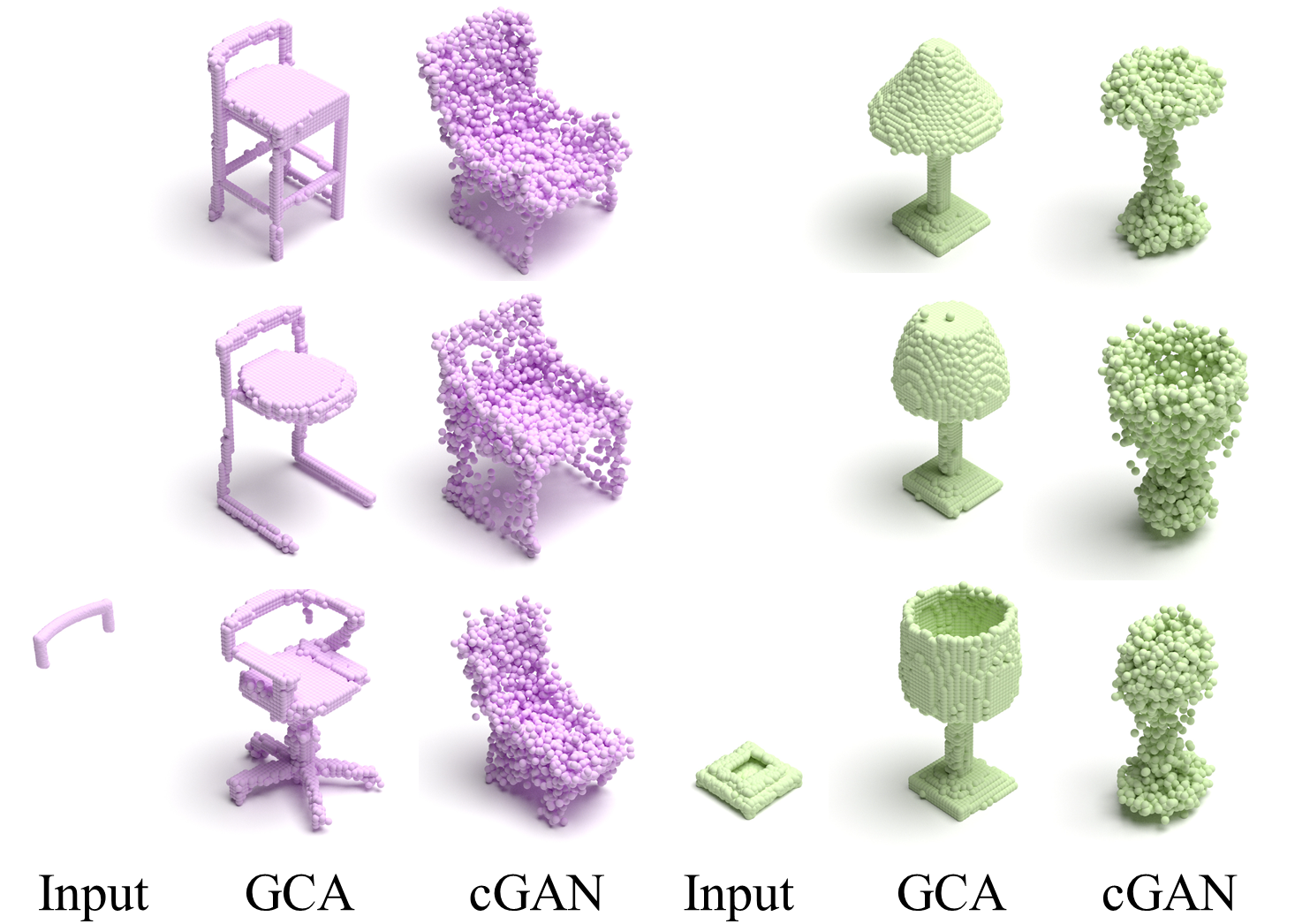}
        \captionof{figure}{
        Qualitative comparison of probabilistic shape completion.
        }
        \label{fig:completion}
    \end{minipage}
    \begin{minipage}[t]{0.473\linewidth}
        \vspace{0pt}
        \centering
        \includegraphics[width=\linewidth]{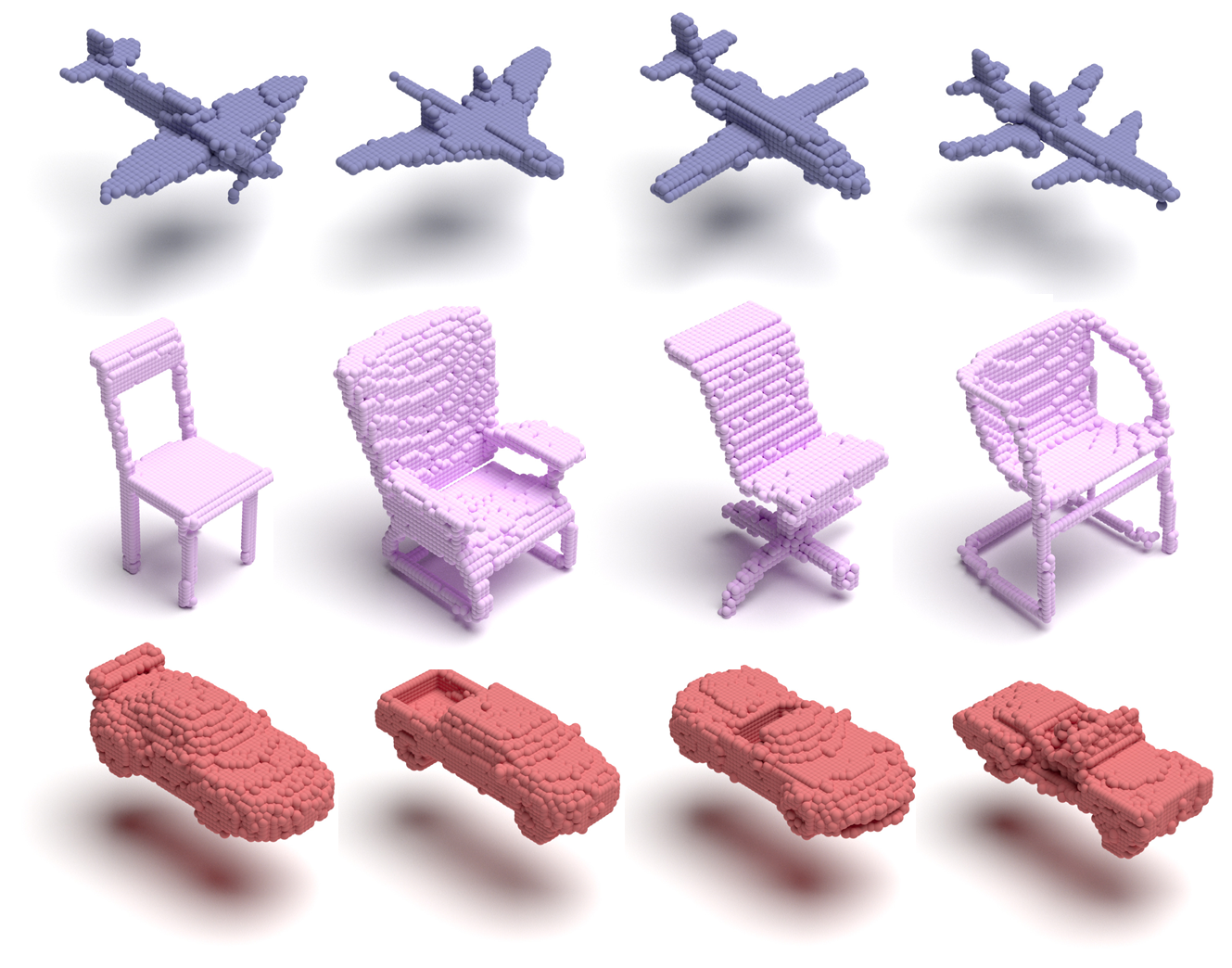}
        \captionof{figure}{
            Samples from shape generation.
        }
        \label{fig:generation}
    \end{minipage}
    \vspace{-1em}
\end{table}

We demonstrate the ability of GCA to generate high-fidelity shapes in two tasks: 
probabilistic shape completion (Sec.~\ref{sec:shape_completion}) and shape generation (Sec.~\ref{sec:shape_generation}).

State-of-the-art works on both tasks generate shapes using point cloud whereas ours uses grid structure.
For comparison, we extract high-resolution points from the CAD model and map them into $64^3$ voxel grid.
Individual shapes in the dataset are centered and uniformly scaled to fit within the cube $[-1,1]^3$.
We also present additional analysis on how the sparse representation and local transition of GCA can successfully learn to generate fine resolution shapes with continuous geometry (Sec.~\ref{sec:analysis_neighborhood}).
Details regarding the training settings and baselines are reported in the appendix~\ref{sec:experimental_details}.

\subsection{Probabilistic Shape Completion} \label{sec:shape_completion}

\textbf{Dataset and implementation details.}
The probabilistic shape completion is tested with PartNet (\cite{mo2019partnet}) and PartNet-Scan (\cite{wu2020multimodal}) dataset, where objects in ShapeNet(\cite{chang2015shapenet}) are annotated with instance-level parts.
For each instance, we identify the input partial object by randomly choosing parts from the complete shape, and generating a diverse set of complete shapes.
PartNet-Scan dataset simulates the real-world situation where partial scans suffer from part-level incompleteness by randomly selecting parts from the PartNet and virtually scanning the remaining parts.
We validate our approach on chair, lamp, and table categories following \cite{wu2020multimodal}. 
We use neighborhood radius $r=3$, $T=70$ with infusion speed $w = 0.005$ for all datasets.
\if 0
For chair and table dataset, we use neighborhood size $r=2$ with $L_1$ metric, $T = 50$ inferences, and infusion speed $w = 0.01$.
On lamp dataset, we use neighborhood radius $r=3$, $T=70$ with infusion speed $w = 0.005$ to handle disjoint parts.
\fi 
Following the work of \cite{wu2020multimodal}, for each partial shape in the test set, we generate ten completion results and sample 2,048 points from occupied cells.

We compare the performance of probabilistic shape completion of GCA against cGAN (\cite{wu2020multimodal}), cGAN and its variations (cGAN-im-l2z and cGAN-im-pc2z)
\footnote{We would like to acknowledge that, when training, cGAN only requires the partial set and complete shape set without any explicit pairing between each of the instances in the set. However, we report cGAN as the baseline to our method, since this is the most recent probabilistic shape completion. cGAN-im-l2z and cGAN-im-pc2z are variants of cGAN that implicitly model the multi-modality for probabilistic shape completion. }
, with 3D-IWGAN (\cite{smith2017iwgan})
The qualitative results in Figure~\ref{fig:completion} and the appendix~\ref{sec:completion_samples_appendix} clearly show that our high-resolution completion exhibits sophisticated geometric details compared to the state-of-the-art.
Our superior performance is further verified in the quantitative results reported in Table  \ref{table:completion_comparison} with three metrics:
minimal matching distance (MMD), total mutual difference (TMD), and unidirectional Hausdorff distance (UHD).
MMD and TMD are reported after generated samples are centered due to translation invariant nature of our model.
The detailed description regarding the evaluation metrics are found in the appendix~\ref{sec:evaluation_metrics}.

We outperform all other methods in MMD and TMD on average, and achieve comparable or the best results in UHD.
The results indicate that we can generate high fidelity (MMD) yet diverse (TMD) shapes while being loyal to the given partial geometry (UHD).
We observe an extremely large variation of completion modes in the lamp dataset, which is the dataset with the most diverse structure while having a high level of incompleteness in the input partial data.
This is because the lamp dataset contains fragmented small parts  that could be selected as an highly incomplete input.
We include further discussion about the diversity in the lamp dataset in the appendix~\ref{lamp_diversity}.
\begin{table}[t]
    \centering
    \resizebox{0.95\textwidth}{!}{
        \begin{tabular}{l|cccc|cccc|cccc}
            \toprule
            PartNet & \multicolumn{4}{c|}{MMD (quality, $\downarrow$)} & \multicolumn{4}{c|}{TMD (diversity, $\uparrow$) } & \multicolumn{4}{c}{UHD (fidelity, $\downarrow$) } \\
            \midrule
            Method & \multicolumn{1}{c|}{Chair} & \multicolumn{1}{c|}{Lamp} & \multicolumn{1}{c|}{Table} & Avg.  & \multicolumn{1}{c|}{Chair} & \multicolumn{1}{c|}{Lamp} & \multicolumn{1}{c|}{Table} & Avg.  & \multicolumn{1}{c|}{Chair} & \multicolumn{1}{c|}{Lamp} & \multicolumn{1}{c|}{Table} & Avg. \\
            \midrule
            3D-IWGAN & 1.94  & 3.57 & 8.18 & 4.56 & 0.99  & 3.58 & 1.52 & 2.03 & 6.89 & 8.86 & 8.07 & 7.94
            \\
            cGAN-im-l2z & 1.74  & 2.36  & 1.68  & 1.93  & 3.74  & 2.68  & 3.59  & 3.34 & 8.41  & 6.37  & 7.21  & 7.33 \\
            cGAN-im-pc2z & 1.90  & 2.55  & 1.54  & 2.00  & 1.01  & 0.56  & 0.51  & 0.69  & 6.65  & \textbf{5.40}  & \textbf{5.38}  & \textbf{5.81} \\
            
            cGAN  & 1.52  & 1.97  & 1.46  & 1.65 & 2.75 & 3.31  & 3.30  & 3.12 & 6.89 & 5.72 & 5.56 & 6.06  \\
            
            \midrule
            GCA  
            & \textbf{1.28} & \textbf{1.85}  & \textbf{1.13} & \textbf{1.42} 
            & \textbf{4.74}  & \textbf{9.38}  & \textbf{4.50}  & \textbf{6.20}
            & \textbf{6.21} & 5.96 & 5.60 & 5.92 \\
            
            GCA (joint)
            & 1.33 & 1.89  & 1.14 & 1.45 
            & 3.69  & 13.49 & 4.55  & 7.24
            & 6.14 & 6.31 & 5.59 & 6.01 \\
            
            \bottomrule
        \end{tabular}%
    }
    
    
    \centering
    \resizebox{0.95\textwidth}{!}{    
        \begin{tabular}{l|cccc|cccc|cccc}
            \toprule
            PartNet-Scan & \multicolumn{4}{c|}{MMD (quality, $\downarrow$)} & \multicolumn{4}{c|}{TMD (diversity, $\uparrow$)} & \multicolumn{4}{c}{UHD (fidelity, $\downarrow$)} \\
            \midrule
            Method & \multicolumn{1}{c|}{Chair} & \multicolumn{1}{c|}{Lamp} & \multicolumn{1}{c|}{Table} & Avg.  & \multicolumn{1}{c|}{Chair} & \multicolumn{1}{c|}{Lamp} & \multicolumn{1}{c|}{Table} & Avg.  & \multicolumn{1}{c|}{Chair} & \multicolumn{1}{c|}{Lamp} & \multicolumn{1}{c|}{Table} & Avg. \\
            \midrule
            cGAN-im-l2z & 1.79  & 2.58  & 1.92  & 2.10  & \textbf{3.85}  & 3.18  & \textbf{4.75}  & 3.93 & 7.88  & 6.39  & 7.40  & 7.22 \\
            cGAN-im-pc2z & 1.65  & 2.75  & 1.84  & 2.08  & 1.91  & 0.50  & 1.86  & 1.42  & 7.50  & \textbf{5.36}  & 5.68  & 6.18 \\
            cGAN  & \textbf{1.53}  & 2.15  & 1.58  & 1.75 & 2.91  & 4.16  & 3.88  & 3.65 & 6.93  & 5.74  & 6.24  & 6.30 \\
            
            \midrule
            GCA  & 1.55  & \textbf{1.97}  & \textbf{1.57}  & \textbf{1.70} 
            & 1.73 & \textbf{10.61}  & 2.89  & \textbf{5.20}
            & \textbf{6.15} & 5.86 & \textbf{5.54}  & \textbf{5.85} \\
            \bottomrule
        \end{tabular}
    }

    \caption{
    	Quantitative comparison of probabilistic shape completion results on PartNet (top) and PartNet-Scan (bottom).
    	The best results trained in single class are in bold.
    	Note that MMD (quality), TMD (diversity) and UHD (fidelity) presented in the tables are multiplied by $10^3$, $10^2$ and $10^2$, respectively.
    	GCA (joint) denotes GCA trained on all categories combined, but evaluated individually.
    }
    \label{table:completion_comparison}
    \vspace{-1em}
\end{table}

\textbf{Multiple category completion}.
While previous works on shape completion are trained and tested individual categories separately, we also provide the results on training with the entire category, denoted as GCA (joint) in Table~\ref{table:completion_comparison}.
The performance of jointly trained transition kernel does not significantly differ to those of independently trained, demonstrating the expressiveness and high capacity of GCA.
In addition, interesting behavior arises with the jointly trained GCA.
As shown in the left image of Figure~\ref{fig:joint_training}, given ambiguous input shape, the completed shape can sometimes exhibit features from categories other than that of the given shape.

Furthermore, GCA is capable of completing a scene composed of multiple objects although it is trained to complete a single object (Figure~\ref{fig:joint_training}, right).
We believe that this is due to the translation invariance and the local transition of GCA.
As discussed in Section~\ref{sec:GCA}, the deep network observes relatively large neighborhood, but GCA updates the local neighborhood based on multiple layers of sparse convolution which is less affected by distant voxels.
The information propagation between layers of sparse convolutional network (\cite{graham2018sparseconv}) is mediated by occupied voxels, which effectively constrains the perception neighborhood to the connected voxels.
As a result, the effect of separated objects are successfully controlled. 
We also include additional examples in the appendix~\ref{sec:initial_states} on how the GCA trained in one category completes the unseen category.

\if 0
\diff{
Furthermore, GCA is capable of completing a scene composed of multiple objects although it is trained to complete a single object (Figure~\ref{fig:joint_training}, right).
We believe that this is due to the translation invariance and the local transition of GCA.
As discussed in Section~\ref{sec:GCA}, the deep sparse convolution network can observe relatively large neighborhood, but GCA regresses to update the local neighborhood based on the connectedness.
As a result, the effect of distant objects are effectively controlled. 
We also include additional examples in the appendix~\ref{sec:initial_states} on how the GCA trained in one category completes the unseen category.
}
\fi

\if 0 
\diff{
	\textbf{Multiple category completion}.
	The results of jointly trained GCA across all categories is reported in Table~\ref{table:completion_comparison}.
	Even if the model is trained jointly across multiple classes, the metrics does not significantly differ to those of independently trained, demonstrating the expressiveness of GCA.
	We also argue that GCA is capable of learning features from multiple categories while maintaining the high model capacity.
	The reconstructed shapes in Figure~\ref{fig:joint_training} have features of multiple categories combined.
}

\diff{
	We observed that GCA is capable of completing a scene composed of multiple objects, even if the model was trained on single object. 
	Right side of Figure~\ref{fig:joint_training} demonstrates a setting where the partial input is composed of multiple objects by moving the PartNet dataset.
	Although the GCA is trained on single object, it is able to complete the scene by taking the whole raw observation as input.
	We believe that this is due to the relatively small perception neighborhood when the sparse convolution (\cite{graham2018sparseconv}) is applied to distant objects, even when the network is very deep.
}
\fi

\if 0
\diff{
	We observed that GCA is capable of completing a scene composed of multiple objects, even if the model was trained on single object. 
	Right side of Figure~\ref{fig:joint_training} demonstrates a setting where the partial input is composed of multiple objects by moving the PartNet dataset.
	Although the GCA is trained on single object, it is able to complete the scene by taking the whole raw observation as input.
	We believe that this is due to the relatively small perception neighborhood when the sparse convolution (\cite{graham2018sparseconv}) is applied to distant objects, even when the network is very deep.
	Compared to classical convolution, a single sparse convolution does not apply convolution operation to empty voxels.
	Therefore, the output of a single voxel is only affected by voxels, where the distance of voxels is smaller than the kernel size.
	As the layers get deeper, a voxel can affect another voxel by multiple sparse convolution, if there is a sequence of voxels, where adjacent voxels are within the kernel size of each other.
	Thus the voxel is more likely to be affected by another one if there is a connected sequence of voxels between them.
}
\fi

\begin{figure}
	\includegraphics[width=\linewidth]{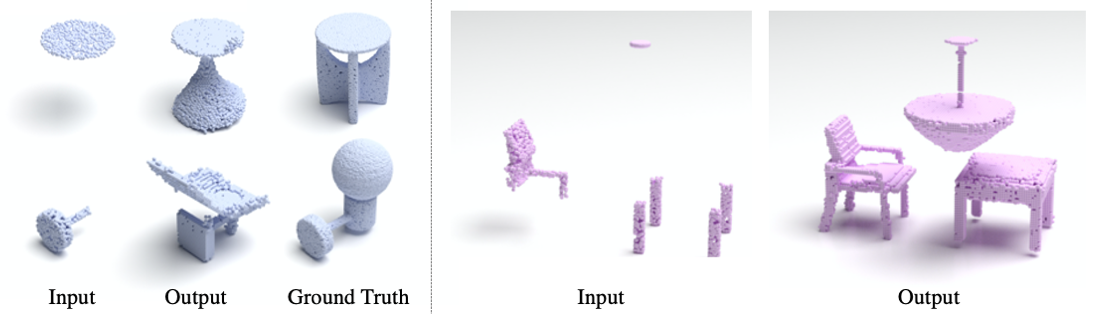}
	\vspace{-1em}
	\caption[Overview] {
		Shape completion results when jointly trained with multiple categories of shapes.
		Given an ambiguous input shape, features from different categories co-exist (top: lamp + table, bottom: lamp + chair) in the completed shape (left).
		Even though GCA is trained to complete a \textit{single} object, it can complete a scene with multiple partial shapes of different categories (right).
	}
	\label{fig:joint_training}
\end{figure}

\subsection{Shape Generation} \label{sec:shape_generation}

\begin{table}[t]
    \centering
    \resizebox{0.95\textwidth}{!}{    
    	\begin{tabular}{l|cccc|cccc|cccc}
    		\toprule
    		ShapeNet  & \multicolumn{4}{c|}{1-NNA (distance of distributions, $\downarrow$)} & \multicolumn{4}{c|}{COV (diversity, $\uparrow$)} &  \multicolumn{4}{c}{MMD (quality, $\downarrow$)}\\
    		\midrule
    		Method & \multicolumn{1}{c|}{Airplane} & \multicolumn{1}{c|}{Car} & \multicolumn{1}{c|}{Chair} & \multicolumn{1}{c|}{Avg.} & \multicolumn{1}{c|}{Airplane} & \multicolumn{1}{c|}{Car} & \multicolumn{1}{c|}{Chair} & \multicolumn{1}{c|}{Avg.} & \multicolumn{1}{c|}{Airplane} & \multicolumn{1}{c|}{Car} & \multicolumn{1}{c|} {Chair}& \multicolumn{1}{c}{Avg.} \\
    		\midrule
    		r-GAN&  95.80 & 99.29 & 84.82 & 93.30 & 38.52 & 8.24 & 19.49 & 22.08& 1.66 & 6.23 & 18.19& 8.69 \\
    		GCN  & 95.16 & - & 86.52 & - & 9.38 & - & 6.95 & - & 2.62 & - & 23.10& -\\
    		Tree-GAN & 95.06 & - & 74.55 & - & 44.69 & - & 40.33 & - & 1.47 & - & 16.15 & - \\
    		3D-IWGAN  & 94.70 & 76.60 & \textbf{63.62} & 78.31 & 38.02 & 42.90 & 45.17 & 42.03 & 1.53 & 4.33 & 15.71 & 7.19 \\
    		PointFlow  &\textbf{83.21} & 68.75 & 67.60 & 73.19& 39.51 & 39.20 & 40.94 & 39.88 & 1.41 & 4.21 & 15.03 & 6.88\\
    		ShapeGF  & 85.06 & 65.48 & 66.16& \textbf{72.23}& \textbf{47.65} & 44.60 & \textbf{46.37} & \textbf{46.21} & 1.29 & \textbf{4.09} & \textbf{14.82} &\textbf{6.73}\\
    		
    		\midrule
    		GCA  & 90.62 & \textbf{65.06} & 65.71 & 74.85 & 35.80 & \textbf{46.31} & 44.56 & 40.79 & \textbf{1.25} & 4.19 & 16.89 & 7.44 \\
    		
    		\midrule
    		Train set & 72.10 & 52.98 & 53.93 &59.67 & 45.43 & 48.30 & 50.45 & 48.06& 1.29 & 4.21 & 15.89 & 7.13\\
    		\bottomrule
    	\end{tabular}
    }
    \
    \caption{
    	Quantitative comparison results of shape generation on ShapeNet.
    	The best results are in bold.
    	Note that MMD is multiplied by $10^3$. 
    }
    \label{table:generation_comparison}
    
    \vspace{-1em}
\end{table}

\textbf{Dataset and implementation details.} 
We test the performance of shape generation using the ShapeNet (\cite{chang2015shapenet}) dataset, focusing on the categories with the most number of shapes: airplane, car, and chair, as presented in \cite{yang2019pointflow}.
We use the same experimental setup as \cite{cai2020shapegf}, and sample 2,048 points from the occupied cells of the generate shape and center the shape, due to translation invariant aspect of our model, for a fair comparison.
We use neighborhood size $r = 2$ with $L_1$ distance metric, $T = 100$ inferences, infusion speed $w = 0.005$ for airplane and car dataset, and $r=3$ for chair category, which tend to converge better.

The quantitative results of shape generation are reported in Table \ref{table:generation_comparison}.
We compare the performance of our approach against recent point cloud based methods: r-GAN (\cite{achlioptas2017latent_pc}), GCN-GAN (\cite{valsesia2019gcngan}), Tree-GAN (\cite{shu2019treegan}), Pointflow (\cite{yang2019pointflow}), ShapeGF (\cite{cai2020shapegf}), a voxel-based method: 3D-IWGAN (\cite{smith2017iwgan}), and training set.
The scores assigned to training set can be regarded as an optimal value for each metric.
We evaluate our model on three metrics, 1-nearest-neighbor-accuracy (1-NNA), coverage (COV), and minimal matching distance (MMD) as defined in the appendix~\ref{sec:evaluation_metrics}.


\if 0
Our method outperforms all of compared approaches in terms of MMD, which indicates that the approach can generate high-fidelity shapes.
Our approach also achieves the state-of-the-art results on COV of car and chair, implying that the method is able to generate diverse results, and comparable results in terms of 1-NNA (comparing distribution of generated shapes and test set).
Note we are using a homogeneous transition kernel confined within the local neighborhood of occupied cells, but can still achieve noticeable performance to generate the rich distribution of shape.
We demonstrate selected examples of our shape generation results in Figure~\ref{fig:generation}.
More qualitative results are in the appendix~\ref{sec:qualitative_appendix}.
\fi

Our approach achieves the state-of-the-art results on 1-NNA of car category.
As remarked by \cite{yang2019pointflow}, 1-NNA is superior on measuring the distributional similarity between generated set and test set from the perspective of both diversity and quality. 
Our method also achieves state-of-the-art results on COV of car, implying that our method is able to generate diverse results. 
Note we are using a homogeneous transition kernel confined within the local neighborhood of occupied cells, but can still achieve noticeable performance on generating a rich distribution of shape.
MMD is a conventional metric to represent the shape fidelity, but some results achieve scores better than the training set which might be questionable (\cite{yang2019pointflow}).
We visualize our shape generation results in Figure~\ref{fig:generation} and in the appendix~\ref{sec:generation_samples_appendix}.

\begin{table}[t]
    \begin{minipage}[t]{0.5\linewidth}
        \vspace{0pt}
        \scalebox{0.9}{\input{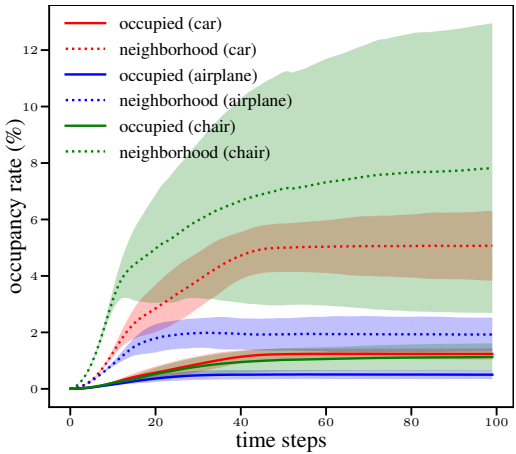}}
        \captionof{figure}{Search space visualization.
        The graph shows the percentage of the occupied voxels (solid lines) and the searched neighborhood voxels (dashed lines) during 100 time steps of shape generation process.
        }
        \label{search_space}
    \end{minipage}
    \hspace{0.5cm}
     \begin{minipage}[t]{0.45\linewidth}
        \vspace{5pt}
        \centering
        \includegraphics[width=\linewidth]{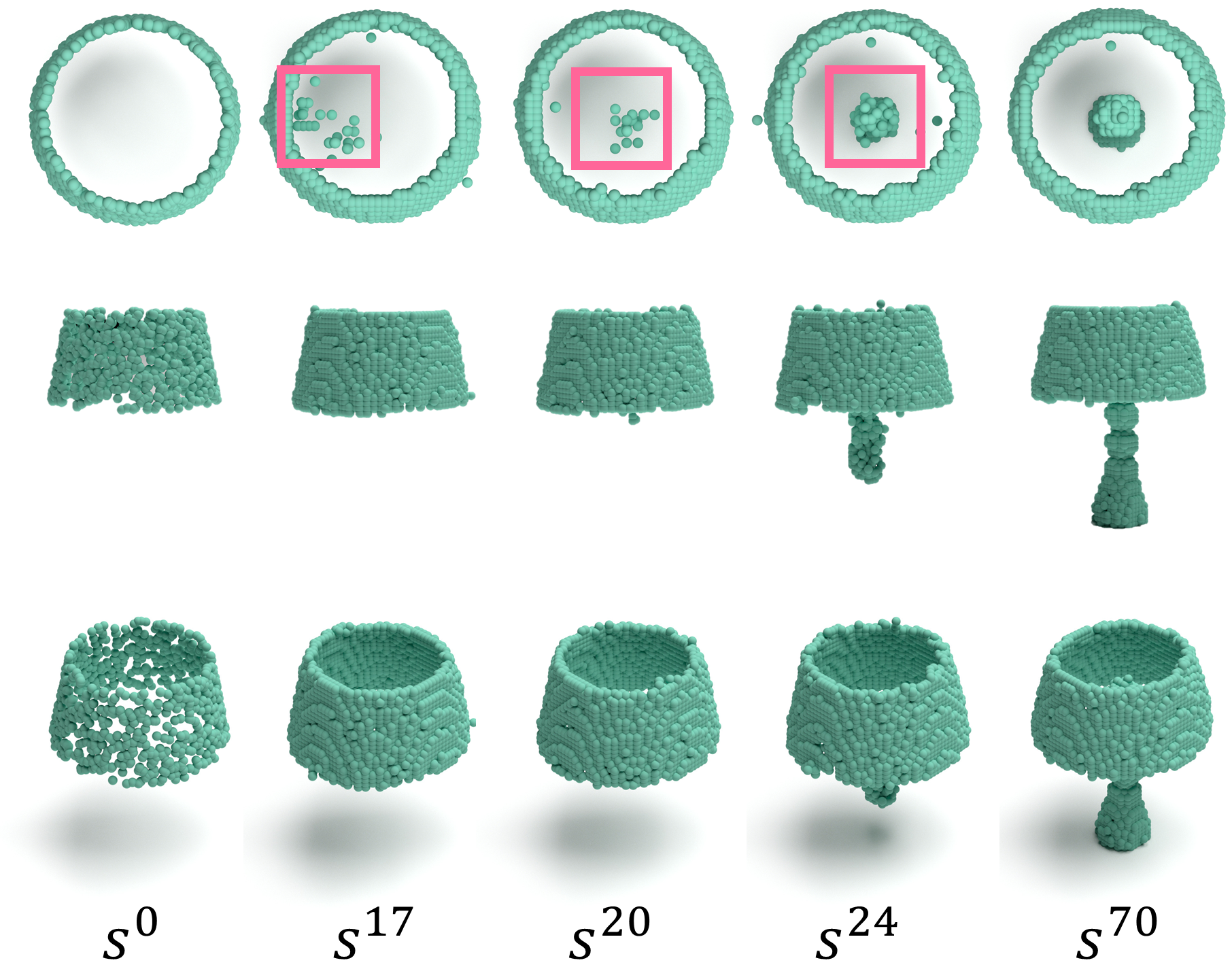}
        \captionof{figure}{
        Completing disconnected shapes.
        Top, front, and side views of generation process are shown with $r = 3$, $L_1$ metric  neighborhood, where the diameter of the lampshade is over 30.
        Given the partial shape of the shade, GCA is able to generate the body of the lamp that is disconnected from the input.
        }
        \label{lamp_disconnected}
    \end{minipage}
    \vspace{-1em}
\end{table}

\subsection{Analysis on Neighborhood and Connectivity}
\label{sec:analysis_neighborhood}

GCA makes high-resolution predictions by only handling the occupied voxels and their neighborhood, thus overcoming the limitation of memory requirement of voxel-based representation of 3D shape.
In Figure~\ref{search_space}, we empirically analyze the required search space during the shape generation process that starts from a single cell and evolves to the full shape.
The occupied voxels take approximately 2\% of the total volume (solid lines), and their neighborhood adds up to 2-8\% (dashed lines).
After a certain amount of time steps, the growing speed of the search space gradually decreases, implying that our method tends to converge and maintain the generated shape.

The large structural variation observed in the lamp dataset leads to interesting observation on the neighborhood size of GCA and the connectivity of the shape.
When $r = 1$, nearly 10\% of the infusion chain data fails to meet the stopping condition (cover more than 95\% of $x$, see the training detail in the appendix \ref{appendix_training}) with time step $T = 100$.
This is because GCA trained on a small neighborhood size, while enabling the transition kernel to be trained quickly, lacks the flexibility to model disconnected shapes even after large time steps.
On the other hand, increased neighborhood size can capture disjoint parts at the expense of larger search space and instability.
When $r = 10$, the trained transition kernel generates output that is noisy or unrelated to the input partial shape.
We believe that the transition kernel can not be trained to cover the entire search space as the neighborhood size increases cubic to $r$.
A few bad sampling steps can lead to a state that is far from the previous state or unobserved during training.
The effects of hyperparameters are further discussed in the appendix~\ref{sec:ablation_study}.

We also clarify the notion of connectivity in relation to the neighborhood size.
While we mostly demonstrate the performance of GCA generating 3D shapes with continuous surface, GCA can surely generate disjoint parts as long as the distance between parts is within the radius $r$.
Besides, GCA is flexible enough to learn to generate parts that are farther than $r$.
$r$ is merely the distance of cells between adjacent steps, and it is possible to learn the sequence of states that generates a temporary \textit{bridge} to reach disconnected parts after a few steps.
Figure~\ref{lamp_disconnected} shows an observed sequence of a sampling chain that generates a temporary bridge from the shade to the body of the lamp further than the neighborhood size.
After the sampling chain reaches the main body, GCA removes the bridge and successfully generates the full shape.

\section{Related Works}

\subsection{Generative Models}

\textbf{Autoregressive models.} PixelCNN and its variants (\cite{oord2016pixelrnn, oord2016pixelcnn}) sequentially sample a single pixel value at a time, conditioned on the previously generated pixels.
PointGrow (\cite{sun2020pointgrow}) extends the idea to 3D by sampling the coordinates of points in a sequential manner.
While these approaches lead to tractable probabilistic density estimation, inferring one pixel/point with a single inference is not scalable to high resolution 3D voxel space. 
GCA, on the other hand, can grow to the shape occupying any number of voxels in the neighborhood at each inference.

\textbf{Markov chain models.}
Previous works learn the transition kernel of Markov chain operating on the input space which eventually produces samples matching the data distribution (\cite{sohl2015nonequil, anirudh2017vw, bordes2017learning}).
The diffusion-based models (\cite{sohl2015nonequil, anirudh2017vw}) start from a random noise distribution and incrementally denoise the input to generate the data.
They learn the reverse of a diffusion process, which begins from the data and gradually adds noise to become a factorial noise distribution.
On the other hand, the work of infusion training (\cite{bordes2017learning}) slightly infuses data to input dimension during training, and learns the chain biased towards the data.
This removes the process of inverting the diffusion chain and allows the chain to start from \textit{any} distribution.
The training of GCA utilizes the technique of infusion
, enabling to train on both shape generation and completion.


\textbf{3D generative models}.
Recent works on 3D generative models (\cite{yang2019pointflow, cai2020shapegf, wu2020multimodal}) achieve noticeable success with point cloud based representation.
They are based on PointNet (\cite{qi2016pointnet}), one of the most widely-used neural network architecture consuming point cloud.
PointNet condenses the 3D shape into a single global feature vector to achieve permutation invariance, but in turn lacks the representation power of local context observed in deep neural architecture.
On the other hand, voxel-based generation methods (\cite{wu20163dgan,smith2017iwgan}) can capture local features by directly adapting a CNN-based architecture, but suffer from immense memory consumption in high-resolution.
Recent works of sparse CNN (\cite{graham2018sparseconv, choy20194d}) solves the problem of voxel representation by exploiting sparsity, and outperforms other methods by a significant margin in the field of 3D semantic segmentation.
To the best of our knowledge, we are the first to utilize the expressive power of sparse CNN to learn a probabilistic 3D generative model in a high-resolution 3D grid.

\subsection{Cellular Automata}
Cellular automata (CA) is introduced to simulate biological or chemical systems  (\cite{wolfram1982ca, markus1990ca}). CA consists of a grid of cells where individual cells are  repeatedly updated with the rule depending only on the state of neighborhood cells.
Each cell is updated with the same set of local rules, but when the update is repeatedly distributed in the entire grid, complex and interesting behavior can emerge.
However, CA requires a fixed rule to be given. 
\cite{wulff1992ca} learn simple rules with a neural network and model the underlying dynamics of CA.
Neural Cellular Automata (\cite{mordvintsev2020nca}) shows an interesting extension of CA that incorporates neural network to learn the iterative update rule and successfully generates a pre-specified image.
While the benefits of CA are not clear in the image domain, we show that the local update rule of CA can excel in generating 3D shape, which is sparse and connected, by achieving near state-of-the-art performance.

There is one major difference of GCA compared to the classical CA.
The transition kernel employs the deep sparse CNN and the effective perception neighborhood is much larger than the update neighborhood.
With the extension, GCA has the capacity to perceive the complex context and concurrently generates high-fidelity local shape as demonstrated by multi-category and multi-object generation in Figure~\ref{fig:joint_training}.

\if 0
\diff{
There are two major differences of GCA compared to the classical CA. 
First, GCA formulates the update rule as a stochastic inference, and can generate a wide range of outputs instead of learning only single output as~\cite{mordvintsev2020nca}.
Secondly, the transition kernel employs the deep neural network and the effective perception neighborhood is much larger than the updated neighborhood.
With the extension, GCA has the capacity to perceive the complex context and concurrently generates high-fidelity local shape as demonstrated by multi-category and multi-object generation in Figure~\ref{fig:joint_training}.
}
\fi

\section{Conclusion and Future Work}
In this work, we present a probabilistic 3D generative model, named GCA, capable of producing diverse and high-fidelity shapes.
Our model learns a transition kernel of a Markov chain operating only on the spatial vicinity of an input voxel shape, which is the local update rules of cellular automata.
We effectively reduce the search space in a high resolution voxel space, with the advantage of employing a highly expressive sparse convolution neural network that leads to state-of-the-art performance in probabilistic completion and yields competitive results against recent methods in shape generation.

Although our experiments are mainly focused on shape generation, we believe our framework is capable of modeling a wide range of sequential data, which tends to be sparse and connected.
For instance, particle-based fluid simulation models continuous movement of fluid particles, where the local update rules of our approach can be applied.

\subsubsection*{Acknowledgments}
We thank Youngjae Lee for helpful discussion and advice.
This research was supported by the National Research Foundation of Korea (NRF) grant funded by the Korea government (MSIT) (No. 2020R1C1C1008195) and the National Convergence Research of Scientific Challenges through the National Research Foundation of Korea (NRF) funded by Ministry of Science and ICT (NRF-2020M3F7A1094300).

\bibliography{bibliography}
\bibliographystyle{iclr2021_conference}

\appendix
\newpage
\section{Proof of Proposition 1} \label{appendix_prop1}
We present the proposition in Sec.~\ref{sec:training} and its proof.

\myprop*

\begin{proof}
If $\tilde{s}^{T'}=x$, $\tilde{s}^{T'+1}=\mathcal{N}(\tilde{s}^{T'}) \cap x=x$ and $\tilde{s}^t=x$ for $t \geq T'$.
	Now we show that there exists an integer $T'$ such that $\tilde{s}^{T'} = x$ by showing $\tilde{s}^{T'} \subset x$ and $\tilde{s}^{T'} \supset x$.
	We first prove the former by showing that sequence of states $\tilde{s}^{t':\infty}$ is an increasing sequence bounded by $x$.
	$\forall t > t'$,
	\begin{align*}
		\tilde{s}^t 
		&= \tilde{s}^t \cap x \\
		&\subset N(\tilde{s}^t) \cap x \\
		&= \tilde{s}^{t + 1} \\
		&\subset x
	\end{align*}
	Now we show $\tilde{s}^{T'} \supset x$ to complete the proof.
	Recall the definition \textit{partially connected}. 
	\mydef*
	By definition, there exists a sequence of coordinates $c^b_{0:T_b}$ for each coordinate $b \in x$.
	Now we show $c^b_t \in \tilde{s}^{t' + t}$ using mathematical induction.
	$c^b_0 \in \tilde{s}^{t'}$.
	Assuming $c^b_t \in \tilde{s}^{t' + t}$ for $t > 0$, 
	\begin{align*}
	c^b_{t + 1} &\in \mathcal{N}(c^b_t)\cap x \\
	&\subset \mathcal{N}(\tilde{s}^{t' + t}) \cap x \\
	&= \tilde{s}^{t' + t + 1} 
	\end{align*}
	So $c^b_{T_b} = b \in \tilde{s}^{t' + T_b}$.
	If we set $T' = \max_{b \in x} T_b$, $x \subset \bigcup_{b \in x} \tilde{s}^{T_b} \subset \tilde{s}^{T'}$ holds, where the second subset holds due to the increasing property of $\tilde{s}^{t':\infty}$ proven above, i.e., $\tilde{s}^{t_1} \subset \tilde{s}^{t_2}$ for $t_1 < t_2$.
	Thus we conclude that $x \subset \tilde{s}^{T'}$.
\end{proof}

\section{Full Description of Training Procedure} 
\label{appendix_training}

The training procedure described in Sec.~\ref{sec:training} introduces the infusion chain and provides high-level description of the training procedure.
When the training data is sampled from the infusion chain, we utilizes the data buffer $\mathcal{B}$  as presented in experience replay (\cite{lin1992experience_replay}).
Sequential state transitions of the same data point, including our proposed infusion chain, are highly correlated and increase the variance of updates.
With the data buffer $\mathcal{B}$, we can train the network with a batch of state transitions from different data points, and de-correlate gradients for back-propagation.

The overall training procedure is described in Algorithm~\ref{alg:training}.
The buffer $\mathcal{B}$ carries tuples that consists of current state $s$, the whole shape $x$, and the time step $t$.
Given the maximum budget $|\mathcal{B}|$, the buffer is initialized by  the tuples $(\tilde{s}^0, x, 0)$ where $\tilde{s}^0 \sim q^0(\tilde{s}^0| x)$ for a subset of data $x \in \mathcal{D}$.
For probabilistic shape completion $q^0(\tilde{s}^0|x)$ is a subset of $x$, and for shape generation we sample just one cell $\{c\} \subset x$.
Then each training step utilizes $M$ tuples of mini-batch popped from the buffer $\mathcal{B}$.
We update the parameters of neural network by maximizing the log-likelihood of the state that is closest to $x$ among the neighborhood of the current state, i.e., $\argmax_\theta{\log p_\theta(x \cap N(\tilde{s}^{t}) | \tilde{s}^{t})}$.
Then we sample the next state from the infusion chain as defined in Eq.~(\ref{eq:infusion_chain}).
If the next state does not meet the stopping criterion, then the tuple with the next state $(\tilde{s}^{t_i + 1}_i, x_i, t_i + 1)$ is pushed back to the buffer.
Else the buffer samples new data from the dataset.

\begin{algorithm}
	\caption{Training GCA}
	\label{alg:training}
	\begin{algorithmic}[1]
		\State{Given dataset $\mathcal{D}$, neural network parameter $\theta$}
		\State {Initialize buffer $\mathcal{B}$ with maximum budget $|\mathcal{B}|$ from $\mathcal{D}$
		}
		\Repeat
		\State {Pop mini-batch $(\tilde{s}^{t_i}_i, x_i, t_i)_{i=1:M}$ from buffer $\mathcal{B}$}
		\State {$\mathcal{L} = 0$}
		\For{each index $i$ in mini-batch}
		\State {$\mathcal{L} \leftarrow \mathcal{L} + \log p_\theta(x_i \cap N(\tilde{s}^{t_i}_i) | \tilde{s}^{t_i}_i)$}
		\State{$\tilde{s}^{t_i + 1}_i \sim q^{t_i}(\tilde{s}^{t_i + 1}_i|\tilde{s}^{t_i}_i, x_i)$}
		\If {$\tilde{s}^{t_i + 1}_i$ does not meet stopping criterion}
		\State {Push $(\tilde{s}^{t_i + 1}_i, x_i, t_i + 1)$ into buffer $\mathcal{B}$}
		\Else
		\State {Sample $x_i \in \mathcal{D}$}
		\State {Sample $\tilde{s}^0_i \sim q^0(\tilde{s}^0_i | x_i)$}
		
		\State {Push $(\tilde{s}^0_i, x_i, 0)$ into buffer $\mathcal{B}$} 
		\EndIf
		\EndFor
		\State {Update parameters $\theta \leftarrow \theta + \eta \frac{\partial \mathcal{L}}{\partial \theta} $}
		\Until{convergence of training or early stopping}
	\end{algorithmic}
\end{algorithm}

While the stopping criterion (line 9) should reflect the convergence to the whole shape $x$, the amount of time steps needed to generate the complete shape varies significantly depending on the incompleteness of the initial shape, the efficiency of the stochastic transitions, and the complexity of the full shape.
In addition, we need to learn to stay at the complete state once it is reached.
We propose a simple solution with an adaptive number of training steps for each data: When $\tilde{s}_t$ contains 95\% of the $x$ for the first time, we train only constant $\hat{T}$ more time steps.
The additional $\hat{T}$ steps lead to learn a transition kernel similar to that of identity, implicitly learning to stay still after the generation is completed as in \cite{anirudh2017vw}.

GCA framework is successfully trained to generate a wide range of shapes even though our transition kernel is confined within its local neighborhood.
Over 99\% of the training data in generation and completion experiments satisfies the stopping criterion, except for lamp in PartNet-Scan dataset on shape completion, in which 97\% of data satisfies the stopping criterion.

\section{Ablation study on hyperparameters}
\label{sec:ablation_study}
\begin{table}[t]
	\begin{minipage}[t]{0.33\linewidth}
		\centering
		\includegraphics[width=\linewidth]{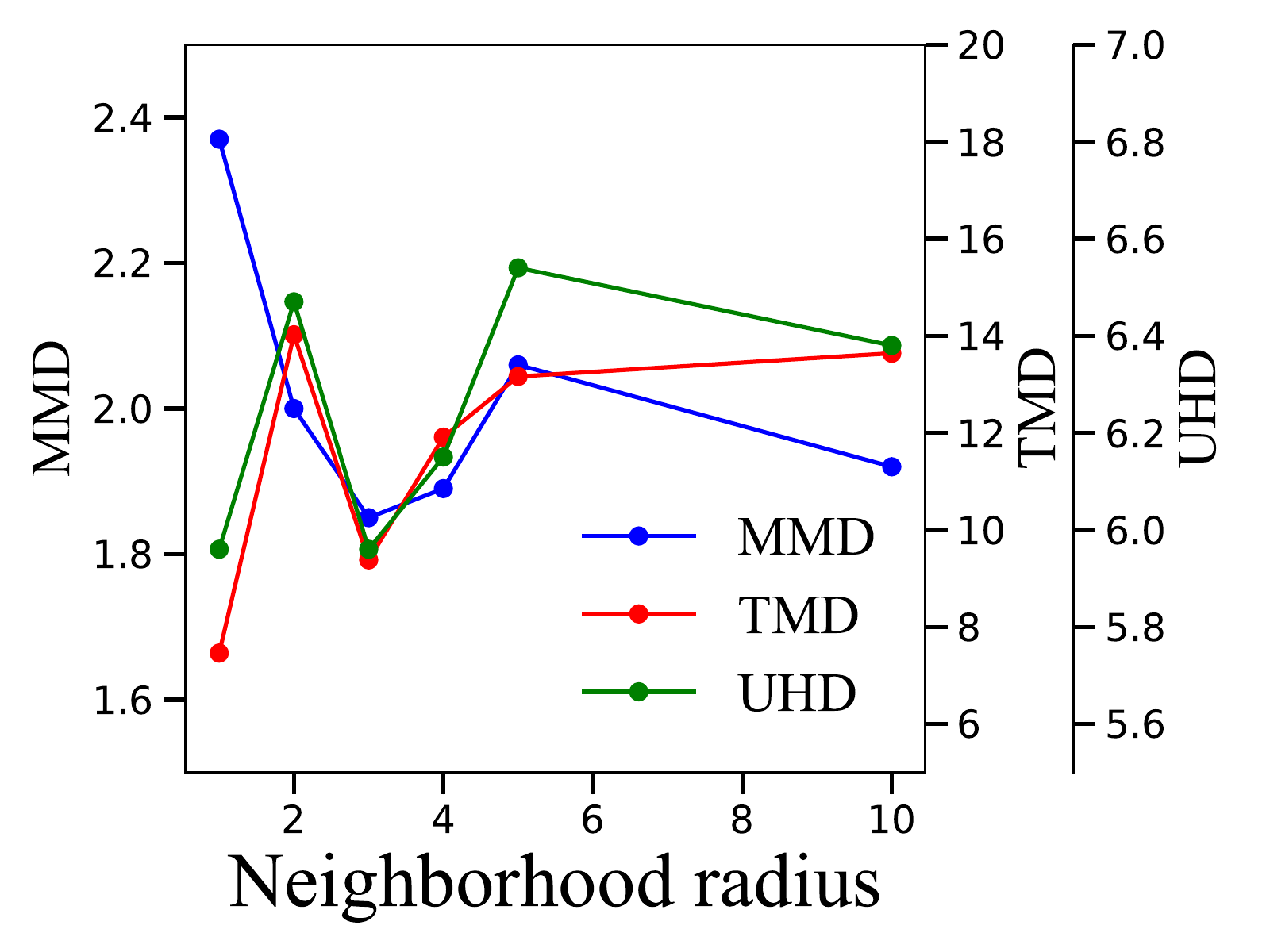}
	\end{minipage}
	\begin{minipage}[t]{0.33\linewidth}
		\centering
		\includegraphics[width=\linewidth]{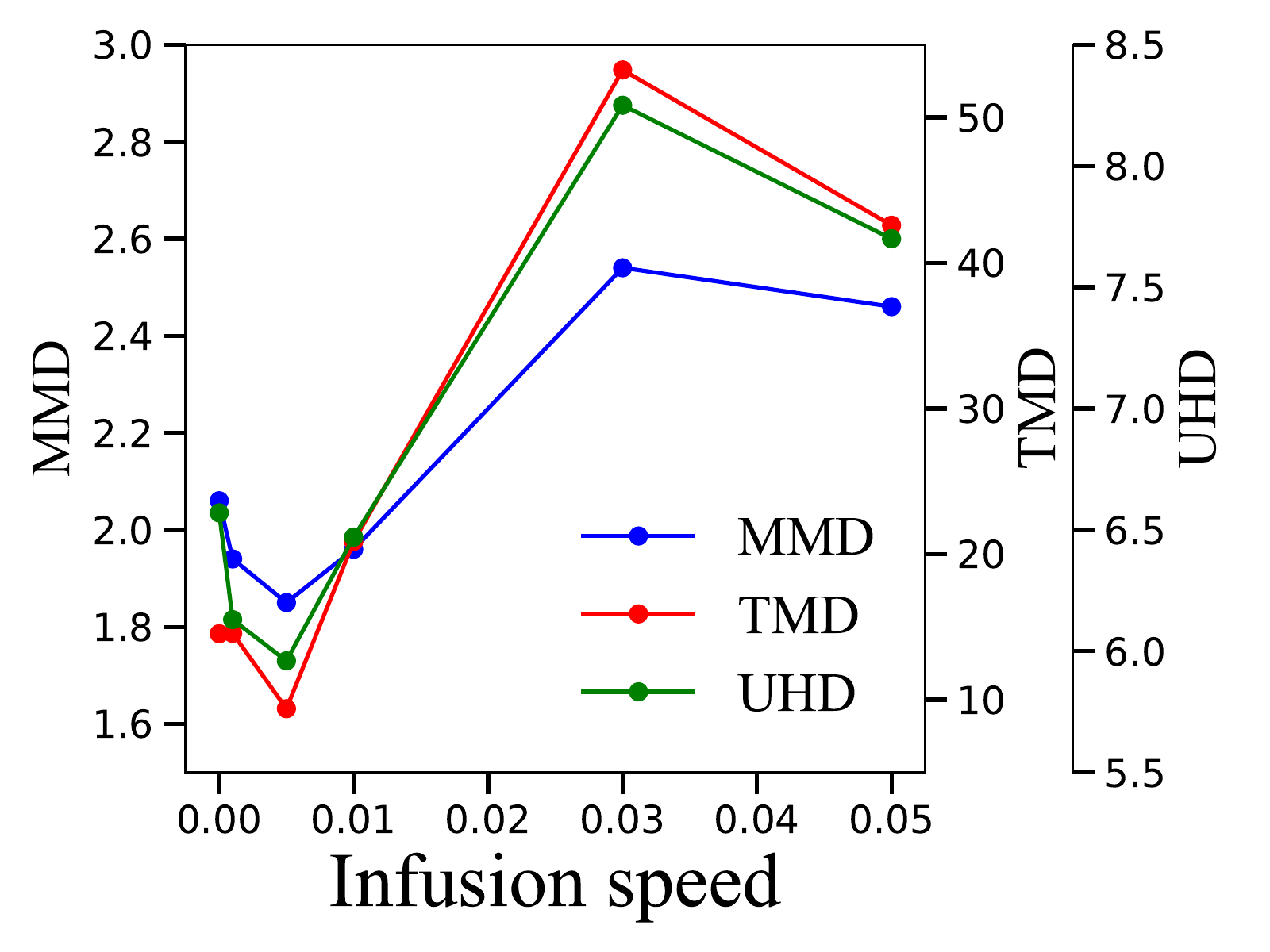}
	\end{minipage}
	\begin{minipage}[t]{0.33\linewidth}
		\centering
		\includegraphics[width=\linewidth]{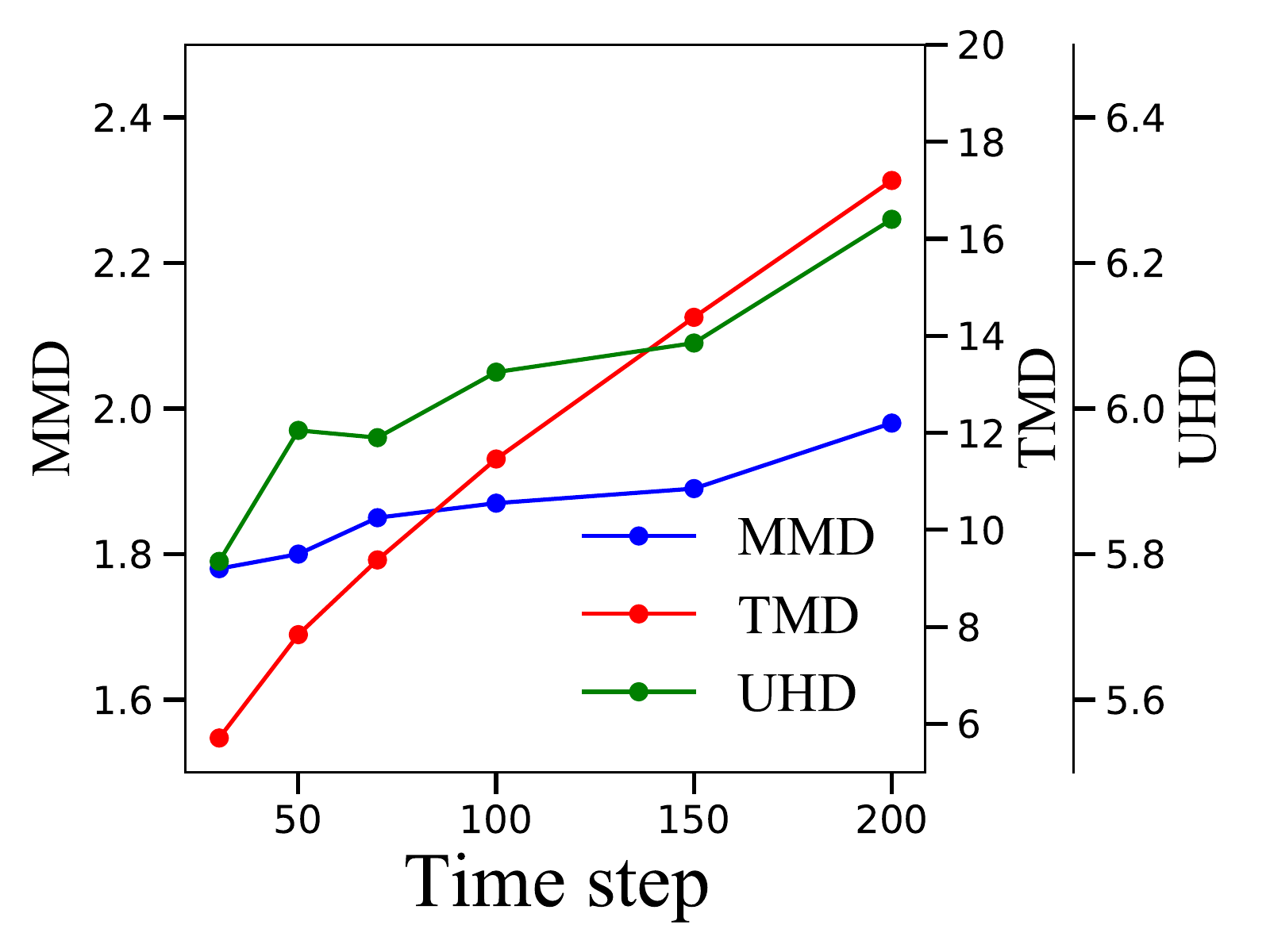}
	\end{minipage}
	\captionof{figure}{
	        Ablation study on the effects of neighborhood radius $r$, infusion speed $w$, and time step $T$ (from left to right) tested with probabilistic shape completion on lamp dataset.
	        Note that MMD (quality), TMD (diversity) and UHD (fidelity) presented in the figures are multiplied by $10^3$, $10^2$ and $10^2$, respectively.
        }
        \label{fig:ablation_study}
\end{table}

In this section, we further investigate the effects of hyperparameters of neighborhood radius $r$, infusion speed $w$, and time step $T$, with probabilistic shape completion on lamp dataset.
The default hyperparameters are set to $r=3$, $w=0.005$, $T=70$, and for each ablation study, only the corresponding feature is changed.
For the study on neighborhood radius and infusion speed, the networks were retrained with the matching hyperparameters, while the ablation study on time steps show the result of the model trained on default hyperparameter but tested with the variation of time step $T$.
The Figure~\ref{fig:ablation_study} shows the effects of $r$, $w$ and $T$ on MMD (quality), TMD (diversity), and UHD (fidelity).

\textbf{The effect of the neighborhood radius $r$.} 
The neighborhood radius $r$ is the size of update neighborhood of the probabilistic distribution for individual cells,  $\mathcal{N}(c_i)=\{ c' \in \mathds{Z}^n |\;  d(c_i, c') \leq r\}$.
When the radius $r$ is too small ($r= 1, 2$), the MMD (quality) is considerably high compared to when $r=3, 4$.
Since the size of update is small, models require more time steps to reconstruct the whole shape, and are not able to generate the disconnected lamps as shown in Figure~\ref{lamp_disconnected}.
When the neighborhood size is adequate ($r=3, 4$), GCA produces fine reconstructed shapes with reasonable scores on all metrics.
However, when the radius is too high ($r=5, 10$), the model yields shapes that are considerably irrelevant to the given initial state, resulting in a high TMD and UHD scores.
Due to the large possible transition states, few bad samplings lead to states that are irrelevant to the initial state, resulting in the degradation of performance.

\textbf{The effect of the infusion speed $w$.} 
The infusion speed controls how likely the groudtruth shape is injected ($\alpha^t=\max(wt,1)$ in Eq.~(\ref{eq:infusion_chain})) with the increasing time step.
When the infusion speed is too low, the infusion chain is close to the sampling chain, but it is less likely to visit a state that observes the whole complete state.
Also, as ~\cite{huszr2015train} said, the model learns a biased estimator, which implies that GCA may not converge to the correct model.
The results in Figure~\ref{fig:ablation_study} show that MMD and UHD are high when the infusion speed $w=0, 0.001$ compared to when $w=0.005$.
This indicates that small $w$ generates shapes with worse quality and fidelity.
On the other hand, when the infusion speed is set too high ($w=0.03, 0.05$), the discrepancy between the infusion chain and sampling chain increases.
We noticed the diverging behavior of GCA, when the infusion speed was set too high.
\cite{bordes2017learning} state that the optimal value for the infusion speed differs depending on the time step $T$.
We empirically saw that $w=0.005$ produced the best result for both shape generation and completion with $T=100$ and $T=70$.

\textbf{The effect of the number of time step $T$.}
The time step $T$ indicates the number of state transitions performed to generate the shape $s^0 \rightarrow s^1 \rightarrow ... \rightarrow s^T$.
The right plot of Figure~\ref{fig:ablation_study} shows the performance of GCA with different time steps trained on the default hyperparameter.
Even though MMD is lowest when T=30, some lamps fail to complete the shape in the given time steps.
As the time steps increase, all scores tend to increase by morphing the initial state to the complete state.
However, we would like to emphasize that the scale of the plot is relatively small and the model is able to keep the MMD (quality) below 0.002 even when ran on 200 time steps.
This is substantially low considering that the MMD of cGAN (\cite{wu2020multimodal}) is 0.00197.
This demonstrates that GCA is able to maintain the quality of shapes, being robust to increasing time steps.

\begin{figure}
    \centering
	\includegraphics[width=\linewidth]{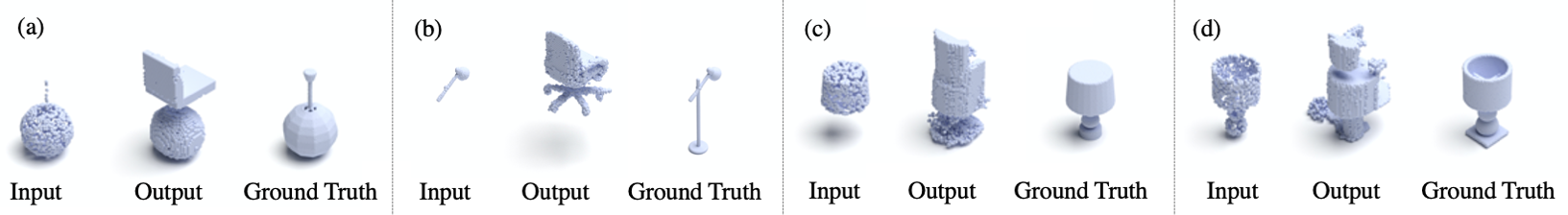}
	\caption[Overview] {
	    Completion results on unseen initial states.
	    The model is trained on chair, but the initial state of lamp is given as input.
	 }
	\label{fig:unseen_state}
\end{figure}

\section{Completion results on unseen initial states}
\label{sec:initial_states}

We investigate the behavior of GCA when an unseen input is given as the initial state.
The Figure~\ref{fig:unseen_state} shows the result of GCA trained on chair completion, but a partial lamp is given as the input.
The behavior can be classified largely into three classes. 
First, the model respects the initial state and is able to generalize to a novel shape as in Figure~\ref{fig:unseen_state} (a).
Second, as in Figure~\ref{fig:unseen_state} (b) and (c) which are the most common cases, the model deforms the initial state into a trained state and generates results learned during training.
The final state contains a state similar to the initial, but deformed to a shape that belongs to the trained category.
Lastly, like in Figure~\ref{fig:unseen_state} (d), if the initial state is dissimilar to the visited state, it fails to generalize and starts diverging.

\section{Diversity Analysis on Lamp Dataset} \label{lamp_diversity}
\begin{figure}
    \centering
	\includegraphics[width=\linewidth]{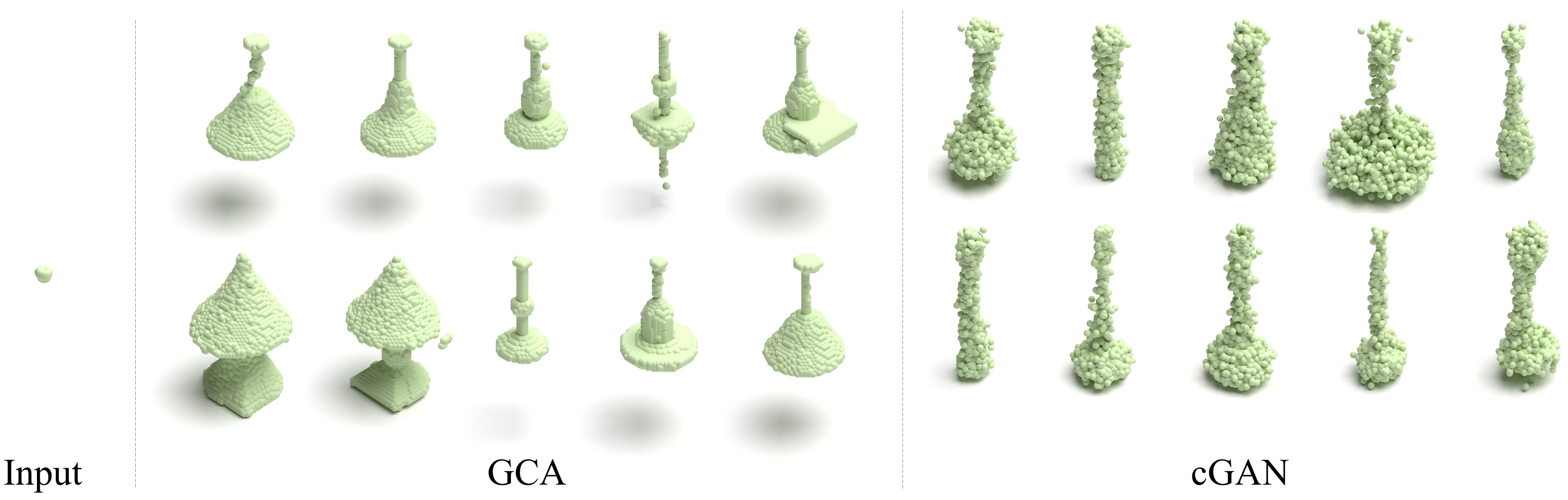}
	\caption[Overview] {
	    Qualitative comparison against cGAN (\cite{wu2020multimodal}) for lamp with small partial input.
	}
	\label{fig:lamp_diversity}
\end{figure}

Among the results presented in Sec.~\ref{sec:experiments}, our diversity score (TMD) for shape completion is significantly higher than the state-of-the-art with lamp dataset (Table~\ref{table:completion_comparison}).
By further investigating instances with high TMD scores in lamp dataset, we could observe that the input part is small and highly incomplete.
The input partial data is created by randomly selecting a part from the instance, and the part labels in lamp dataset are fragmented and diverse with less regular structure with many small parts.
A typical example with high TMD score is shown in Figure~\ref{fig:lamp_diversity}.
The input is a small part of a lamp, and our completion results cover a wide variety of lamps including table lights and ceiling lights, and still exhibit fine geometry.
In contrast, cGAN~(\cite{wu2020multimodal}) handles the ambiguity with consistently blurry output.
The TMD score for this specific example is 20.45 for GCA, while the cGAN achieves 6.592.

\if 0
\begin{figure}
    \centering
    \includegraphics[width=0.5\linewidth]{figures/padding.pdf}
    \caption{asdfadsf}
    \label{fig:ablation_padding}
\end{figure}
\begin{figure}
    \centering
    \includegraphics[width=0.5\linewidth]{figures/infusion.pdf}
    \caption{asdfadsf}
    \label{fig:ablation_infusion_rate}
\end{figure}
\begin{figure}
    \centering
    \includegraphics[width=0.5\linewidth]{figures/phase.pdf}
    \caption{asdfadsf}
    \label{fig:ablation_phase}
\end{figure}
\fi

\section{Converging to a Single Mode}
\label{sec:appendix_mode}
\begin{table}[t]
    \begin{minipage}[t]{0.13\linewidth}
        \includegraphics[width=\linewidth]{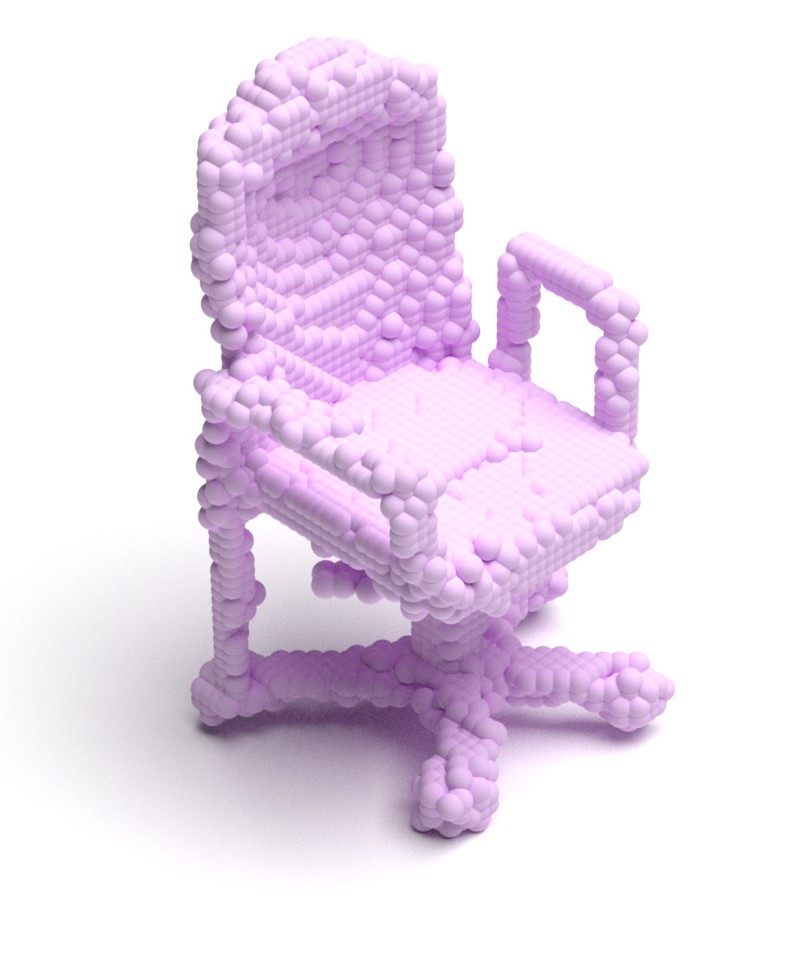}
        \captionof{figure}{\\Failure case.}
        \label{failure_case}
    \end{minipage}
    \begin{minipage}[t]{0.05\linewidth}
    \end{minipage}
    \begin{minipage}[t]{0.8\linewidth}
        \centering
        \includegraphics[width=\linewidth]{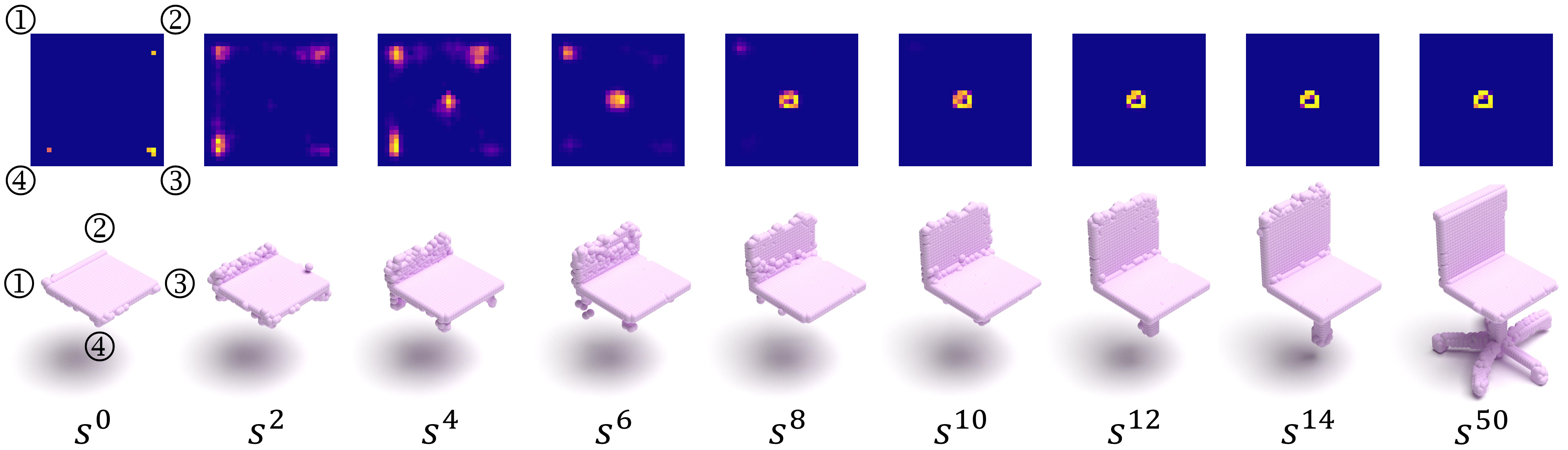}
        \captionof{figure}{Probability heatmap of chair completion. The probability of next state of occupancy is presented at a horizontal cross section, immediately below the input chair seat. We also present the current state $s^t$.}
        \label{chair_heatmap}
    \end{minipage}
\end{table}

Our scalable formulation can generate multiple voxels at a single inference time. This is a unique characteristic of our formulation compared to other autoregressive models (\cite{oord2016pixelcnn,sun2020pointgrow}) and the key to handle high-resolution 3D shapes.
Specifically, the multiple voxels of the next state are sampled from a conditionally independent occupancy for each cell,  i.e., $p_\theta(s^{t + 1} | s^t) = \prod_{c \in \mathcal{N}(s^t)}p_\theta(c|s^t)$.
While we gain efficiency by making multiple independent decisions in a single inference, there is no central algorithm that explicitly controls the global shape.
In other words, we can view the generation process of GCA as morphogenesis of an organism, but there is no unified `DNA' to eventually move towards a single realistic shape.
We indeed encounter rare failure cases that make multiple conflicting decisions on the global structure, as presented in Figure~\ref{failure_case}.

However, for most of the cases, our model is capable of generating a globally consistent shapes, converging to a single mode out of multi-modal data distribution.
For example, when a chair seat is given as the partial input shape as in Figure~\ref{chair_heatmap}, there are multiple possible modes of shape completion, including four-leg chairs or swivel chairs.
The ambiguity is present in $s^{2:6}$, where both the center and the corners have high probability of chair legs.
GCA eventually picks a mode and decides to erase the legs at the corners and only grow towards the center, creating a swivel chair.
As shown in this example, the transition kernel of GCA observes the change in the context and eventually converges to a globally consistent mode even with independent samples of individual cells.

\section{Experimental Details}
\label{sec:experimental_details}

\subsection{Neural Network Architecture and Implementation Details}

\begin{figure}
    \centering
	\includegraphics[width=\linewidth]{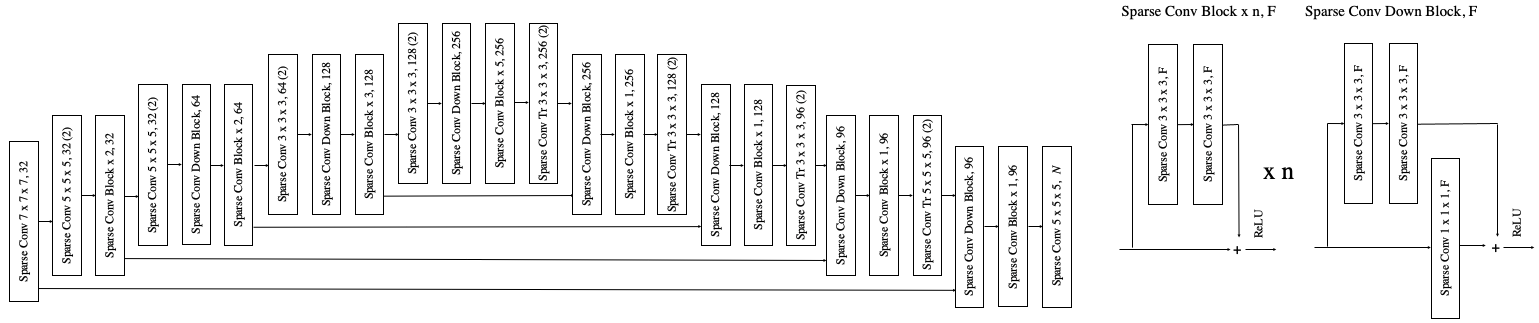}
	\caption[Overview] {
	    Architecture of U-Net.
	    The parenthesis denotes the stride of the sparse convolution / deconvolution.
	    Every convolution is followed by batch normalization.
	}
	\label{fig:unet}
\end{figure}

\label{sec:implementation_details}
All experiments conducted use a variant of U-Net (\cite{ronneberger2015unet}) architecture implemented with sparse convolution library MinkowskiEngine (\cite{choy20194d}), depicted in Figure~\ref{fig:unet}.
We train our method using the Adam (\cite{kingma2015adam}) optimizer with an initial learning rate of 5e-4 and batch size of 32.
The learning rate decays by 0.5 every 100k steps.
All experiments are run on RTX 2080 ti 11GB GPU, except for the analysis on the neighborhood size with $r=10$, which ran on Titan RTX 24GB GPU.

\subsection{Baselines}
 \label{sec:baselines}
All experiments in Table~\ref{table:completion_comparison} and Table~\ref{table:generation_comparison} are excerpted from \cite{wu2020multimodal} and \cite{cai2020shapegf} except for the results of 3D-IWGAN (\cite{smith2017iwgan}).
3D-IWGAN was trained on the dataset same as ours for 4000 epochs using the author's offical code: \url{https://github.com/EdwardSmith1884/3D-IWGAN}, with the default hyperparameters.
Since the original code only supports 32 x 32 x 32 voxel resolution, we added extra convolution / deconvolution layer for the GAN's discriminator, generator and VAE, generating 64 x 64 x 64 voxel resolution.

\subsection{Evaluation Metrics}
 \label{sec:evaluation_metrics}
We provide a more detailed explanation regarding quantitative metrics for evaluation of our approach.
Since we compare our method against recent point cloud based methods, we convert voxels into point clouds simply by treating the coordinates of occupied voxels as points.

For all of the evaluations, we use the Chamfer distance (CD) to measure the similarity between shapes represented as point clouds.
Chamfer distance between two sets of point cloud $X$ and $Y$ is formally defined as 
\begin{equation}
	d_{CD}(X, Y) = \sum_{x \in X}{\min_{y \in Y}{\norm{x - y}_2^2}} + \sum_{y \in Y}{\min_{x \in X}{\norm{x - y}_2^2} }.
\end{equation}

\textbf{Probabilistic Shape Completion}. 
We employ the use of minimal matching distance (MMD), total mutual difference (TMD), and unidirectional Hausdorff distance (UHD), as in \cite{wu2020multimodal}.

Let $S_p$ be the set of input partial shapes and $S_c$ be the set of complete shapes in the dataset.
For each partial shape $P \in S_p$, We generate $k = 10$ complete shapes $C^P_{1:k}$.
Let $G = \{ C^P_{1:k} \}$ be the collection of the entire generated set started from all elements in $S_p$.

\begin{itemize}
    \item \textbf{MMD} measures the quality of the generated set.
	For each complete shape in the test set, we compute the distance to the nearest neighbor in the generated set $G$ and average it, which is formally defined by
	\begin{equation}
		\text{MMD} = \frac{1}{|S_c|} \sum_{Y \in S_c}{\min_{X \in  G}{d_{CD}(X, Y)}}.
	\end{equation}
	
	\item \textbf{TMD} measures the diversity of the generated set.
	We average the Chamfer distance of all pairs of the generated set for the same partial input, formally defined by
	\begin{equation}
		\text{TMD} = \frac{1}{|S_p|}
			\sum_{P \in S_p}{
				\frac{2}{k(k - 1)} \sum_{1 \leq i < k}{\sum_{i < j \leq k}}{d_{CD}(C^P_i, C^P_j)}.
			}
	\end{equation}
	\item \textbf{UHD} measures the fidelity of the completed results against the partial inputs.
	We average the unidirectional Hausdorff distance $d_{HD}$ from the partial input to each of the $k$ completed results, formally defined as
	\begin{equation}
		\text{UHD} = \frac{1}{|S_p|}
		\sum_{P \in S_p}{
			\frac{1}{k} \sum_{1 \leq i \leq k} d_{HD}(P, C^P_i)
		}.
	\end{equation}
\end{itemize}

\textbf{Shape Generation}. We employ the use of 1-nearest-neighbor-accuracy (1-NNA), coverage (COV, and minimal matching distance (MMD)) as in \cite{cai2020shapegf}. Since MMD was introduced above, we state the definition of 1-NNA and COV.
Let $S_g$ be the set of generated shapes and $S_r$  be the set of reference shapes.

\begin{itemize}
\if 0
    \item \textbf{MMD} measures the quality of the generated set.
	For each shape in the reference set, we compute the distance to nearest neighbor and average it, which is formally defined by
	\begin{equation}
		\text{MMD}(S_g, S_r) = \frac{1}{|S_r|} \sum_{Y \in S_r}{\min_{X \in S_g}{d_{CD}(X, Y)}}
	\end{equation}
\fi
	\item \textbf{1-NNA}, proposed by ~\cite{lopez2015revisiting}, evaluates whether two distributions are identical.
	For shape $X$, we denote the nearest neighbor as $N_X = \argmin_{Y \in S_{-X}}{d_{CD}(X, Y)} $, where $S_{-X}$ represents the set including the entire generated and reference shapes except itself, $S_{-X} = S_r \cup S_g - {X}$.
	1-NNA is defined to be
	\begin{equation}
		\text{1-NNA}(S_g, S_r) = \frac{
			\sum_{X \in S_g}{\mathbbm{1}_{N_X \in S_g}} + \sum_{Y \in S_r}{\mathbbm{1}_{N_Y \in S_r}}
		}{|S_g| + |S_r|},
	\end{equation}
	where $\mathbbm{1}$ is an indicator function.
	The optimal value of 1-NNA is 50\%, when the distribution of two sets are equal, unable to distinguish the two sets.
	\item \textbf{COV} measures the proportion of shapes in the reference set that are matched to at least one shape in the generated set, formally defined by
	\begin{equation}
		\text{COV}(S_g, S_r) = \frac{
			|\{\argmin_{Y \in S_r} d_{CD}(X, Y) | X \in S_g\}|
		}{|S_r|}.
	\end{equation}
\end{itemize}

\newpage

\section{Additional Samples On Probabilistic Completion} \label{sec:completion_samples_appendix}
\begin{figure}[ht]
    \centering
	\includegraphics[width=\linewidth]{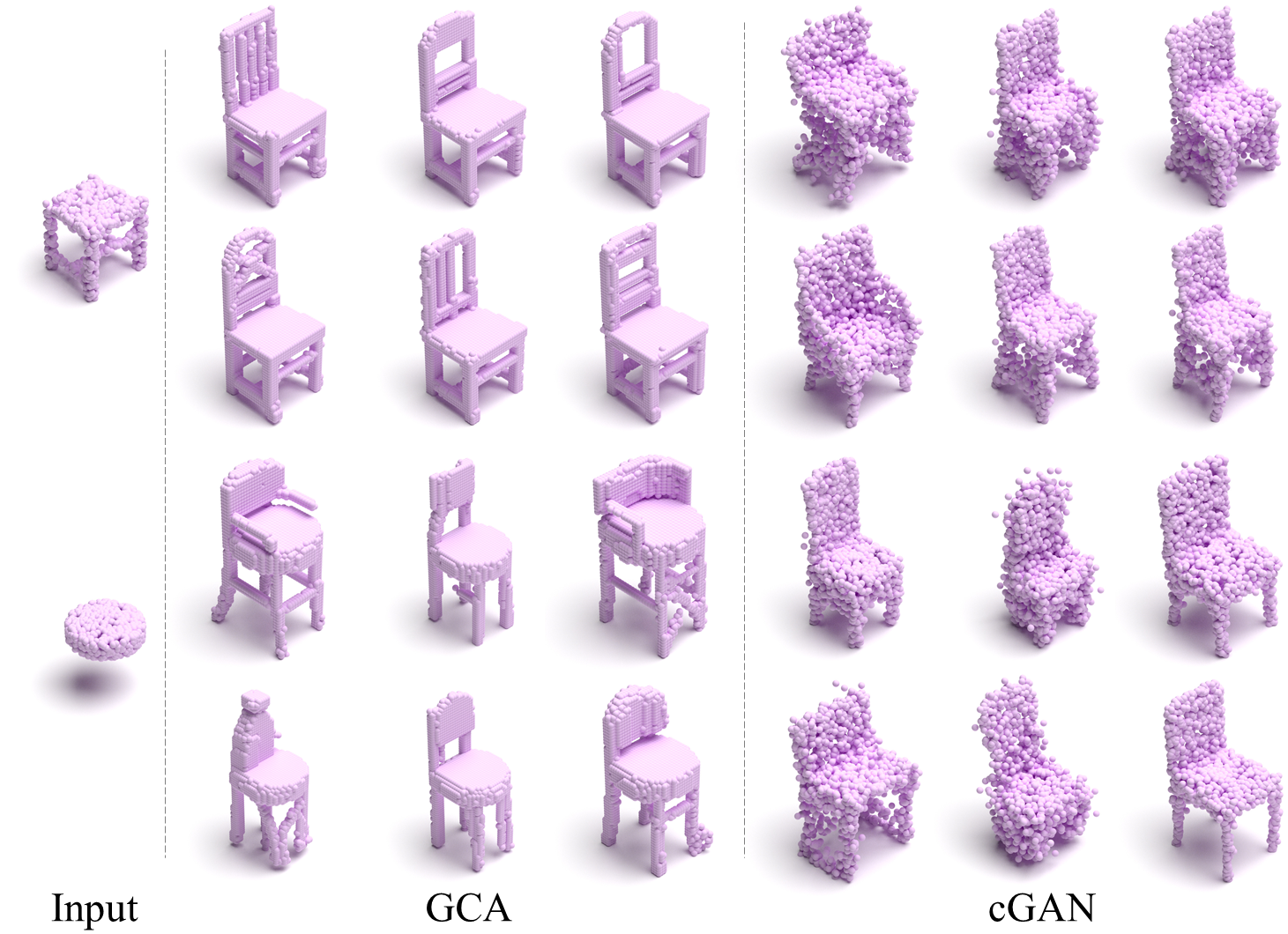}
	\caption[Overview] {
	    Qualitative comparison against cGAN (\cite{wu2020multimodal}) of probabilistic shape completion on chair.
	}
	\label{completion_appendix_chair}
\end{figure}

\begin{figure}
    \centering
	\includegraphics[width=\linewidth]{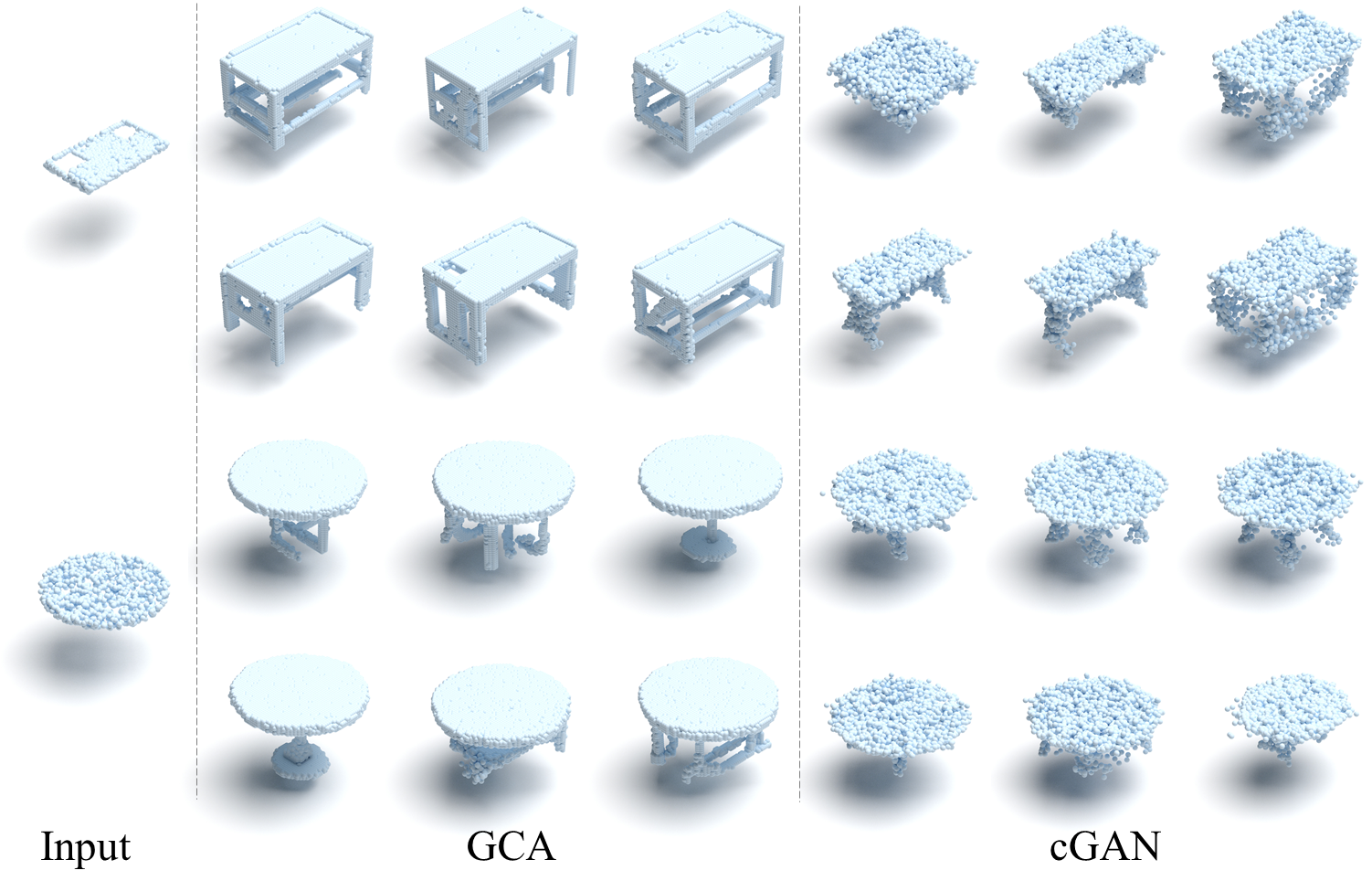}
	\caption[Overview] {
	    Qualitative comparison against cGAN (\cite{wu2020multimodal}) of probabilistic shape completion on table.
	}
	\label{completion_appendix_table}
\end{figure}

\begin{figure}
    \centering
	\includegraphics[width=\linewidth]{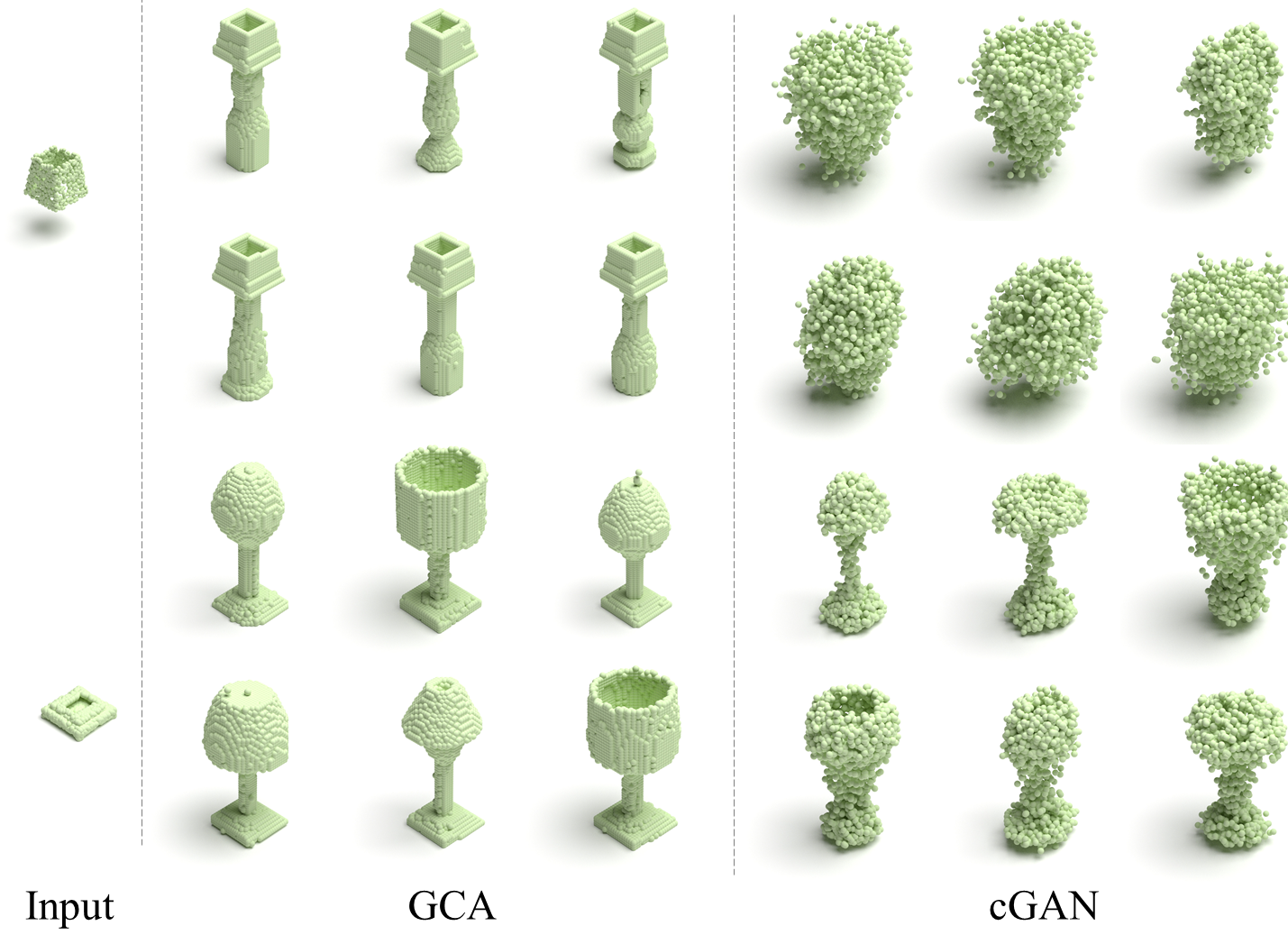}
	\caption[Overview] {
	    Qualitative comparison against cGAN (\cite{wu2020multimodal}) of probabilistic shape completion on lamp.
	}
	\label{completion_appendix_lamp}
\end{figure}

\newpage
\section{Additional Samples On Shape Generation} \label{sec:generation_samples_appendix}

\begin{figure}[ht]
	\includegraphics[width=\linewidth]{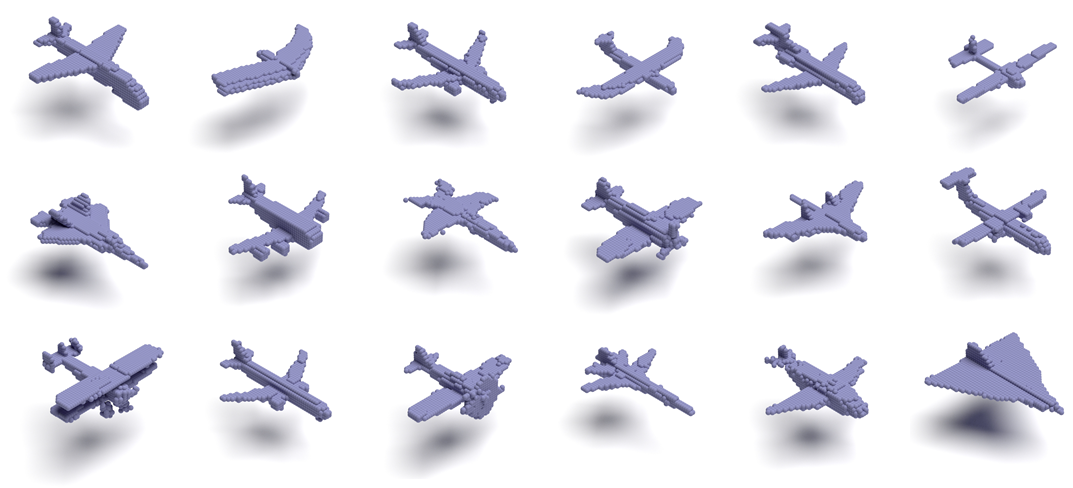}
	\caption[Overview] {
	    Samples from shape generation on airplane dataset.
	}
	\label{generation_results_appendix_airplane}
\end{figure}

\begin{figure}[ht]
	\includegraphics[width=\linewidth]{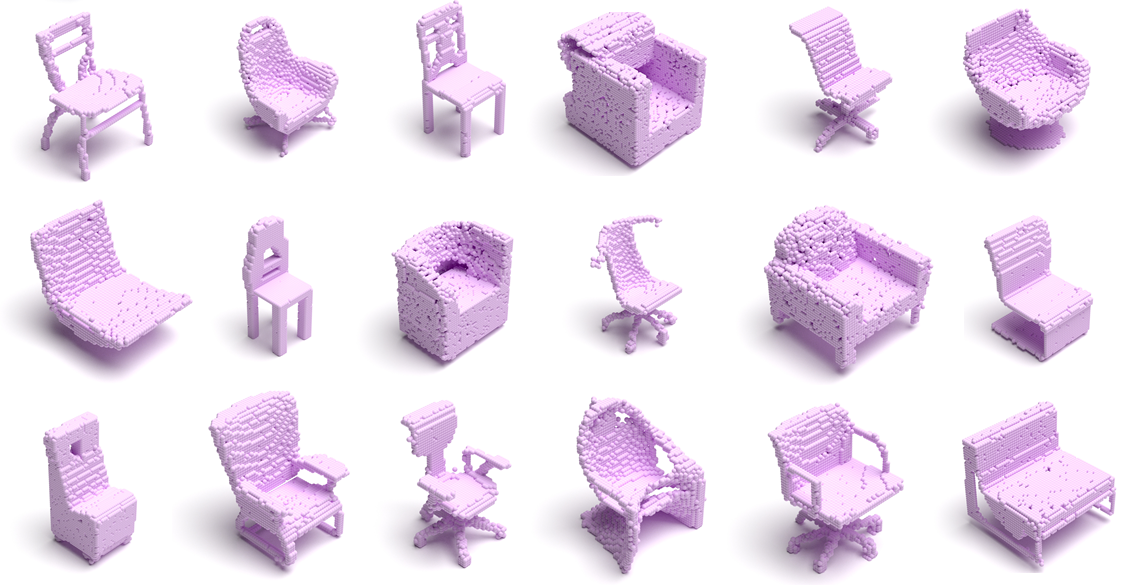}
	\caption[Overview] {
	    Samples from shape generation on chair dataset.
	}
	\label{generation_results_appendix_chair}
\end{figure}

\begin{figure}
	\includegraphics[width=\linewidth]{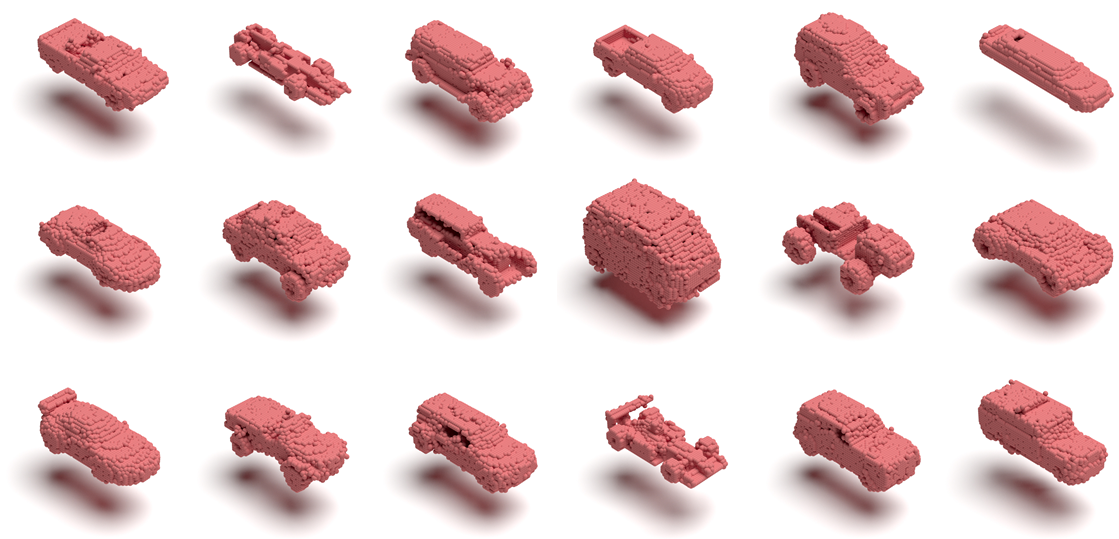}
	\caption[Overview] {
	    Samples from shape generation on car dataset.
	}
	\label{generation_results_appendix_car}
\end{figure}

\end{document}